\documentclass[11pt]{article}

\pdfoutput=1

\usepackage[a4paper, portrait, margin=2cm]{geometry}
\usepackage[colorlinks=true,linkcolor=blue,citecolor=ForestGreen]{hyperref}
\usepackage{algorithm}
\usepackage[noend]{algpseudocode}
\usepackage{url}
\usepackage{amsmath,amssymb,amsthm}
\usepackage{thmtools,thm-restate}
\usepackage[noabbrev,capitalise,nameinlink]{cleveref}
\usepackage{mathtools}
\usepackage{xspace}
\usepackage{verbatim}
\usepackage{mathrsfs}
\usepackage[usenames,dvipsnames,svgnames,table]{xcolor}
\usepackage{pgf}
\usepackage[dvipsnames]{xcolor}
\usepackage{todo}
\usepackage{tabularx}
\usepackage{enumitem}
\usepackage{derivative}
\usepackage{bm}
\usepackage{multirow}
\usepackage{diagbox}
\usepackage{nicematrix}
\usepackage[most]{tcolorbox}
\usepackage{esint}

\usepackage{dsfont}

\newcommand*\ie{i.\kern.1em e.\ }
\newcommand*\eg{e.\kern.1em g.\ }
\newcommand*\cf{c.\kern.1em f.\ }
\newcommand*\almev{a.\kern.1em e.\ }


\definecolor{ama-iro}{RGB}{0, 158, 243.0}
\definecolor{fuyu-gaki}{RGB}{251, 74, 52}
\definecolor{momiji}{RGB}{245, 70, 111}
\definecolor{hotaru-bi}{RGB}{229,221,58} 
\definecolor{kon-peki}{RGB}{1,120,217}
\definecolor{shin-kai}{RGB}{77,98,152}
\definecolor{shin-ryoku}{RGB}{1,145,97}
\definecolor{yama-budo}{RGB}{171,14,122}


\definecolor{citecolor}{HTML}{5E9100}

\definecolor{theoremcolor}{HTML}{FFE1D9}
\definecolor{resultcolor}{HTML}{FFE1D9}

\definecolor{constraintcolor}{HTML}{BBD49B}
\definecolor{goalcolor}{HTML}{CAE4A7}

\definecolor{remarkcolor}{HTML}{E3EEC7}

\definecolor{definitioncolor}{HTML}{FCDFBE}

\definecolor{examplecolor}{HTML}{F9E9D9}
\definecolor{questioncolor}{HTML}{DDE8EB}

\definecolor{captioncolor}{RGB}{128,128,128}

\hypersetup{
  colorlinks=true,
  linkcolor=momiji,
  filecolor=ama-iro,
  urlcolor=momiji,
  citecolor=citecolor
}


\topsep=1em 
\theoremstyle{plain}
\newtheorem{theorem}{Theorem}[section]
\newtheorem{lemma}[theorem]{Lemma}

\newtheorem{proposition}[theorem]{Proposition}
\newtheorem{claim}[theorem]{Claim}
\newtheorem{corollary}[theorem]{Corollary}
\newtheorem{question}[theorem]{Question}
\newtheorem{question*}{Question}

\newtheorem{observation}[theorem]{Observation}
\newtheorem{assumption}[theorem]{Assumption}

\crefname{claim}{Claim}{Claims}
\crefname{fact}{Fact}{Facts}

\theoremstyle{definition}

\newtheorem{definition}[theorem]{Definition}
\newtheorem{remark}[theorem]{Remark}
\newtheorem{example}[theorem]{Example}
\newtheorem{goal}[theorem]{Goal}
\newtheorem{goal*}{Goal}

\theoremstyle{plain}


\newenvironment{boxtheorem}{\begin{theorem}}{\end{theorem}}
\tcolorboxenvironment{boxtheorem}{colback=theoremcolor, colframe=white,
    colbacktitle=theoremcolor, coltitle=theoremcolor}

\newenvironment{boxlemma}{\begin{lemma}}{\end{lemma}}
\tcolorboxenvironment{boxlemma}{colback=resultcolor, colframe=white,
    colbacktitle=resultcolor, coltitle=resultcolor}

\newenvironment{boxproposition}{\begin{proposition}}{\end{proposition}}
\tcolorboxenvironment{boxproposition}{colback=resultcolor, colframe=white,
    colbacktitle=resultcolor, coltitle=resultcolor}

\tcolorboxenvironment{boxcorollary}{colback=resultcolor, colframe=white,
    colbacktitle=resultcolor, coltitle=resultcolor}

\tcolorboxenvironment{boxobservation}{colback=resultcolor, colframe=white,
    colbacktitle=resultcolor, coltitle=resultcolor}

\newenvironment{boxquestion}{\begin{question}}{\end{question}}
\tcolorboxenvironment{boxquestion}{colback=questioncolor, colframe=white,
    colbacktitle=questioncolor, coltitle=shin-kai}
\newenvironment{boxquestion*}{\begin{question*}}{\end{question*}}
\tcolorboxenvironment{boxquestion*}{colback=questioncolor, colframe=white,
    colbacktitle=questioncolor, coltitle=questioncolor}

\newenvironment{boxdefinition}{\begin{definition}}{\end{definition}}
\tcolorboxenvironment{boxdefinition}{colback=definitioncolor, colframe=white,
    colbacktitle=definitioncolor, coltitle=definitioncolor}

\tcolorboxenvironment{boxassumption}{colback=definitioncolor, colframe=white,
    colbacktitle=definitioncolor, coltitle=definitioncolor}

\tcolorboxenvironment{boxexample}{colback=examplecolor, colframe=white,
    colbacktitle=examplecolor, coltitle=examplecolor}

\newtheorem{exercise}[theorem]{Exercise}

\tcolorboxenvironment{boxexercise}{colback=exercisecolor, colframe=white,
    colbacktitle=exercisecolor, coltitle=exercisecolor}

\tcolorboxenvironment{boxgoal}{colback=goalcolor, colframe=white,
    colbacktitle=goalcolor, coltitle=goalcolor}

\newenvironment{boxgoal*}{\begin{goal*}}{\end{goal*}}
\tcolorboxenvironment{boxgoal*}{colback=goalcolor, colframe=white,
    colbacktitle=goalcolor, coltitle=goalcolor}

\tcolorboxenvironment{boxremark}{colback=remarkcolor, colframe=white,
    colbacktitle=remarkcolor, coltitle=remarkcolor}


\newcommand{\ignore}[1]{}


\newcommand{\dist}{\mathsf{dist}}

\newcommand{\VC}{\mathsf{VC}}





\newcommand{\Ex}[1]{\bE \left[ #1 \right]}
\newcommand{\Exu}[2]{\underset{#1} \bE \left[ #2 \right] }
\newcommand{\Exuc}[3]{\underset{#1} \bE \left[ #2 \;\; \left| \;\; #3
\right.\right] }

\renewcommand{\Pr}[1]{\bP \left[ #1 \right]} 
\newcommand{\Pru}[2]{\underset{ #1 }\bP \left[ #2 \right]}
\newcommand{\Pruc}[3]{\underset{ #1 }\bP \left[ #2 \;\; \mid \;\; #3 \right]}

\newcommand*{\define}{\mathrel{\vcenter{\baselineskip0.5ex \lineskiplimit0pt
                      \hbox{\scriptsize.}\hbox{\scriptsize.}}}%
                      =}



\newcommand{\inn}[1]{\langle #1 \rangle}

\newcommand{\ind}[1]{\mathds{1} \left[ #1 \right] }


\newcommand{\zo}{\{0,1\}}
\newcommand{\pmset}{\{\pm 1\}}



\newcommand{\cD}{\ensuremath{\mathcal{D}}}

\newcommand{\cF}{\ensuremath{\mathcal{F}}}

\newcommand{\cH}{\ensuremath{\mathcal{H}}}

\newcommand{\cJ}{\ensuremath{\mathcal{J}}}


\newcommand{\bE}{\ensuremath{\mathbb{E}}}

\newcommand{\bN}{\ensuremath{\mathbb{N}}}
\newcommand{\bP}{\ensuremath{\mathbb{P}}}
\newcommand{\bR}{\ensuremath{\mathbb{R}}}


\usepackage{tgpagella}
\usepackage{float}
\usepackage{afterpage}
\usepackage{accents}
\usepackage{bm}
\usepackage{centernot}
\usepackage[nottoc]{tocbibind}
\usepackage{tocloft}
\restylefloat{table}
\usepackage{etoolbox}
\usepackage{nicematrix}
\usepackage{caption}
\usepackage{xspace}
\usepackage{bbm}
\usepackage{cleveref}
\usepackage{pgfplots}
\usepackage{soul}

\usepackage{wrapfig}
\usepackage{subcaption}

\usepackage{booktabs,tabularx}
\usepackage{changepage}

\usepackage{ragged2e}        
\usepackage{nicematrix}     

\captionsetup[figure]{labelfont={tt, small, color=ama-iro}, textfont={tt,small,color=captioncolor}, }

\newcommand{\TV}{\mathsf{TV}}
\newcommand{\ACCEPT}{\mathsf{Accept}}
\newcommand{\REJECT}{\mathsf{Reject}}

\newcommand{\SOPP}{\mathcal{SOPP}}

\renewcommand{\epsilon}{\varepsilon}

\newcommand{\unif}{\mathsf{unif}}
\newcommand{\Ber}{\mathsf{Ber}}
\DeclareMathOperator*{\argmax}{arg\,max}

\newtoggle{anonymous}
\togglefalse{anonymous}

\title{Feature Selection and Junta Testing are Statistically Equivalent}

\iftoggle{anonymous}{%
    }{%
\author{ \hspace*{-1.5em}
Lorenzo Beretta\\
\hspace*{-1em} \emph{UCSC}
\and
 Nathaniel Harms\\
\emph{EPFL}
\and
Caleb Koch\\
\emph{Stanford}
}
}


\begin{document}

\maketitle

\begin{abstract}
For a function $f \colon \{0,1\}^n \to \zo$, the junta testing problem asks whether $f$ depends on
only $k$ variables. If $f$ depends on only $k$ variables, the feature selection problem asks to find
those variables. We prove that these two tasks are statistically equivalent. Specifically, we show
that the ``brute-force'' algorithm, which checks for any set of $k$ variables consistent with
the sample, is simultaneously sample-optimal for both problems, and the optimal sample size is
\[
\Theta\left(\frac 1 \varepsilon \left( \sqrt{2^k \log {n \choose k}} + \log {n \choose
k}\right)\right).
\]
\end{abstract}

{
\setcounter{tocdepth}{2} 
\tableofcontents
}
\thispagestyle{empty}
\setcounter{page}{0}
\newpage
\setcounter{page}{1}

\section{Introduction}

Humans throughout history, and computers more recently, have faced the task of determining which
information is relevant for some goal. Blum writes, ``nearly all results in machine learning,
whether experimental or theoretical, deal with problems of separating relevant from irrelevant
information'' \cite{Blu94}. 
For example, let's say we are given limited access to some function $f
\colon \zo^n \to \zo$; imagine that each $x \in \zo^n$ is the medical record of a patient and
$f(x)=1$ if they have disease $X$. We want to know about disease $X$, but not all information about
a patient may be relevant. So we ask:
\begin{enumerate}[itemsep=0pt]
    \item Does $f$ depend on only $k < n$ variables?
    \item If so, which $k$ variables?
\end{enumerate}
We don't have direct access to $f$, nor can we query $f(x)$ on an arbitrary input $x$, because we
can't just make up a patient and see if they have the disease. So, as in the standard PAC learning
model of Valiant \cite{Val84}, we assume that we see only random examples of the form $(\bm x, f(\bm x))$ where
$\bm x$ is drawn from an unknown probability distribution $\cD$ over $\zo^n$. This is the
\emph{distribution-free sample-based} model. In this model, given $m$ random examples $(\bm x, f(\bm
x))$, a parameter $k$, and a distance parameter $\epsilon > 0$, questions 1-2 may be formalized as:
\begin{enumerate}[itemsep=0pt]
    \item \textbf{Testing $k$-Juntas:} Output $\ACCEPT$ with probability $3/4$ if $f$ depends on
        only $k$ variables (\ie $f$ is a \emph{$k$-junta}), and output $\REJECT$ with probability
        $3/4$ if $f$ is \emph{$\epsilon$-far} from being a $k$-junta, meaning that for all
        $k$-juntas $g$,
        \[
            \epsilon < \dist_\cD(f,g) \define \Pru{\bm x \sim \cD}{f(\bm x) \neq g(\bm x)} .
        \]
    \item \textbf{$k$-Feature selection:} Assuming $f$ is a $k$-junta, output a set $T \subseteq
        [n]$ of $k$ variables such that (with probability $3/4$) there exists a $k$-junta $g$ on
        variables $T$ satisfying $\dist_\cD(f,g) < \epsilon$.
\end{enumerate}
These two fundamental problems do not easily reduce to each other\footnote{The standard
testing-by-learning reduction of \cite{GGR98} does not work to reduce testing juntas to feature
selection.}, are not obviously equivalent, and tight bounds on the required sample sizes are not
known.  However, they can both be solved by the same obvious algorithm, provided that the sample
size is large enough:

\begin{algorithm}
  \caption{Obvious algorithm}
  \label{alg:obvious}
  \begin{algorithmic}[1]
    \State Draw $m$ samples $S = \{ (x_i, f(x_i)) \mid i \in [m] \}$.
    \ForAll{sets of variables $T \in \binom{[n]}{k}$}
      \State Check if $S$ rules out $T$, \ie check for $x_i, x_j$ where $f(x_i) \neq f(x_j)$ but
        $x_i, x_j$ match on variables $T$, proving that $f$ does not
        depend only on variables $T$.
    \EndFor
    \If{any $T$ are not ruled out}
      \State Output any such $T$ (or output $\ACCEPT$ if you are testing)
    \Else
      \State Output $\REJECT$
    \EndIf
  \end{algorithmic}
\end{algorithm}
\ignore{
\begin{itemize}
\item Collect your set of $m$ samples $S = \{ (x_i, f(x_i)) \mid i \in [m] \}$.
\item For every 
set $T \subset [n]$ of $|T|=k$ variables, check if your samples rule out $T$, \ie check for
two points $x_i, x_j \in S$ which have the same values in $T$ and yet $f(x_i) \neq f(x_j)$.
\item If any $T$ remains, use it as the set of $k$ relevant variables for feature selection,
or output $\ACCEPT$ if you are testing. If no $T$ remains, output $\REJECT$.
\end{itemize} }
This algorithm is far from optimal for testing juntas when the algorithm is allowed to
make adaptive queries \cite{Bla09,Bsho19}.  But in the distribution-free sample-based setting, we show
that it is simultaneously sample-optimal for both feature selection and junta testing,
establishing that they are statistically equivalent:
\begin{boxtheorem}[Informal]
\label{thm:intro-informal}
Testing $k$-juntas and $k$-feature selection are statistically equivalent: they require the same
sample size
$m = \Theta\left(\frac{1}{\epsilon} \left( \sqrt{2^k \log\binom{n}{k}} +
\log\binom{n}{k}\right)\right)$,
and \cref{alg:obvious} is sample-optimal for both.
\end{boxtheorem}
To prove this theorem, we will prove tight lower bounds for both problems, and an upper bound on the
obvious algorithm. This is the first tight bound for testing any natural class of boolean functions
in the distribution-free sample-based model (see \cref{section:intro-big-picture}), and it improves
on the analyses of \cite{AHW16,BFH21}. For constant $\epsilon > 0$, our lower bound holds for the
uniform distribution.

\subsection{Counterfactual Worlds With Interesting Algorithms}

In the standard PAC learning model, if we want to \emph{learn} an unknown $k$-junta, it is known
that the ``obvious algorithm'' --- \ie output any $k$-junta $g$ that is not ruled out by the samples
--- is sample-optimal, requiring $\Theta\left(\tfrac{1}{\epsilon}(2^k + \log\binom{n}{k})\right)$
samples. In fact, for any hypothesis class $\cH$, there is an ``obvious algorithm'' that generalizes
\cref{alg:obvious}:

\begin{algorithm}
  \caption{Obvious algorithm for testing or learning $\cH$}
    \label{alg:erm}
  \begin{algorithmic}[1]
    \State Draw $m$ samples $S = \{ (x_i, f(x_i)) \mid i \in [m] \}$.
    \ForAll{functions $g \in \cH$}
      \State Check if $S$ rules out $g$, \ie check if $f(x_i) \neq g(x_i)$ for some $x_i \in S$.
    \EndFor
    \If{any $g \in \cH$ are not ruled out}
      \State Output any such $g$ (or output $\ACCEPT$ if you are testing)
    \Else
      \State Output $\REJECT$
    \EndIf
  \end{algorithmic}
\end{algorithm}

This algorithm is sample-optimal for learning \emph{any} class $\cH$, up to a $\log(1/\epsilon)$
factor\footnote{The $\log(1/\epsilon)$ factor gap depends both on whether the consistent output $g$
is chosen in a clever way \cite{AO07}, and whether we allow \emph{improper} learning, \ie outputting
a function which may not belong to $\cH$ \cite{Han16,Lar23}.} \cite{BHW89,EHKV89,AO07,Han16,Lar23}.
Let us clarify that \cref{alg:erm} may require different sample size depending on whether we want it
to solve the testing problem or the learning problem, although the algorithm itself remains the
same. By analogy, one may therefore wonder if it is also optimal for the associated \emph{testing}
problems.

Surprisingly, the answer is no, as seen in \cite{GR16,FH25} for testing \emph{support-size}: testing
if $f \colon [n] \to \zo$ takes value 1 on at most $k$ points, or if it is $\epsilon$-far from this
property (\ie $\dist_\cD(f,g) > \epsilon$ for all $g$ taking value 1 on at most $k$ points).  To
solve this problem, one may use only $O(\tfrac{k}{\epsilon \log k}\log(1/\epsilon))$ samples,
whereas \cref{alg:erm} requires $\Theta(k/\epsilon)$ samples. Building on breakthroughs of
\cite{VV11stoc,VV17,WY19} for estimating the support size of probability distributions, the improved
tester uses Chebyshev polynomials as estimators to avoid learning the function $f$, the histogram of
the underlying distribution $\cD$, or even an estimate of the support size\footnote{The algorithm
finds a good enough lower bound on the support size, with no corresponding upper bound, see
\cite{FH25}.}.

\cref{thm:intro-informal} rules out any similar tricks for testing juntas.To elaborate on this,
consider two points:
\begin{enumerate}
\item We obtain the upper bound for the obvious tester in the obvious way: determine the number of
samples required to rule out a single set $T$ with failure probability at most $\binom{n}{k}^{-1}$,
and then apply the union bound over all $\binom{n}{k}$ sets.
\item The best lower bound for testing juntas, prior to this work, is
\begin{equation}
\label{eq:intro-easy-bound}
    m = \Omega\left( \sqrt{2^k} + \log\binom{n}{k} \right) ,
\end{equation}
from \cite{AHW16}. We get $\sqrt{2^k}$ because this is the number of
samples required to find a pair $(x,y)$ that match on a fixed set of $k$ variables (a birthday
paradox argument), and we get $\log\binom{n}{k}$ because we need to rule out all $\binom{n}{k}$
parity functions on $k$ bits.
\end{enumerate}
In a counterfactual world where testing saves a only a $\sqrt{\log n}$ or even any $\omega(1)$
factor over feature selection, the tester \emph{must not} be learning the relevant variables, and
must rely on more interesting algorithmic techniques.  \cref{eq:intro-easy-bound} does not rule out
the possibility that a tester could, say, use correlations between sets of variables to avoid the
union bound analysis. To rule out these possibilities, prove optimality of the obvious algorithm,
and establish equivalence between junta testing and feature selection, we need bounds that are tight
up to constant factors. 

\subsection{The Big (Small) Picture: Fine-Grained Testing vs.~Learning}
\label{section:intro-big-picture}
Juntas are one of the most fundamental classes of functions in property testing (see
\cref{section:intro-comparison} for references to several prior works), so it is valuable to obtain
tight bounds, but \cref{thm:intro-informal} is also a piece of the larger puzzle of understanding
decision vs.~search problems with random samples. \cref{thm:intro-informal} advances a line of work
\cite{GGR98,GR16,BFH21,FH23,FH25} on the \emph{testing vs.~learning} question of Goldreich,
Goldwasser, and Ron \cite{GGR98}, in the distribution-free sample-based model corresponding to
standard PAC learning:

\begin{boxquestion}[Testing vs.~Learning]
\label{question:testing-vs-learning}
For which classes $\cH$ can testing be performed with fewer samples than learning?
\end{boxquestion}

This is an instance of the classic decision vs.~search dichotomy.  PAC learning is perhaps the most
well-understood model of learning, and the sample size required for learning any class $\cH$ is
between $\Theta(\tfrac{\VC}{\epsilon})$ and $\Theta(\tfrac{\VC}{\epsilon}\log(1/\epsilon))$, where
$\VC$ denotes the VC dimension of $\cH$ \cite{BHW89,EHKV89,Han16,Lar23}. Whereas the search problems
(learning) are well-understood, the sample sizes and algorithmic methods required for the associated
\emph{decision} problems (testing) are poorly understood.  

Indeed, \cref{thm:intro-informal} is the first tight bound (up to constant factors) for testing any
natural class of boolean functions in the distribution-free sample-based model, and juntas join
functions of support size $\leq k$ as the only classes with bounds known up to $\log(1/\epsilon)$
factors \cite{FH25}. The goal is to find a more general theory of testing to match the successful
theory of learning. \cref{thm:intro-informal} helps advance this goal by providing an important
contrast to the results on testing support size.  In particular, these results help clarify what we
think of as \emph{fine-grained} versions of the testing vs.~learning question, which we believe are
more insightful for understanding property testing and decision vs.~search in the distribution-free
sample-base model.

While \cref{question:testing-vs-learning} asks to compare the sample size of testing juntas
vs.~learning juntas, the obvious \cref{alg:obvious} for testing juntas does not even attempt to
learn the function itself, only the set of relevant variables. Therefore it seems more insightful to
compare junta testing to feature selection, and to the performance of \cref{alg:obvious}, than to
PAC learning. This comparison naturally generalizes to any other class $\cH$ via \cref{alg:erm}.  We
have shown that this algorithm is sample-optimal for testing juntas, whereas prior work on support
size \cite{GR16,FH25} shows that is not \emph{always} sample-optimal.  It is not clear to us what
general principle separates these examples.

\begin{boxquestion}
For which classes $\cH$ is \cref{alg:erm} sample-optimal for testing?
\end{boxquestion}

An important observation is that, unlike property testing models where the algorithm can make
queries, in the sample-based model \cref{alg:erm} is the unique 1-sided error tester; a 1-sided
error tester must never output $\REJECT$ if there exists $g \in \cH$ that is consistent with the
samples, whereas a 2-sided error algorithm must only make the correct decision with probability
$2/3$. So the question is equivalent to:

\begin{boxquestion}
For which classes $\cH$ is there an advantage for 2-sided error testers?
\end{boxquestion}

We may think of this question as asking when we can make a decision based on \emph{evidence} instead
of \emph{proof}; a 1-sided tester must not reject without \emph{proof}, whereas a 2-sided tester is
satisfied by strong evidence.  This is a fine-grained version of the testing vs.~learning question,
where we compare the \emph{algorithms} instead of only the sample sizes, and we believe it often
provides more insight into the decision vs.~search dichotomy. 

\subsection{Comparison to Other Models of Testing}
\label{section:intro-comparison}
As a byproduct of our analysis, we also get a tight lower bound (for constant $\epsilon$)
on \emph{testing junta trunction}. This was introduced recently by He \& Nadimpalli \cite{HN23} as an
instance of the general problem of testing truncation of distributions (see \eg
\cite{DNS23,DLNS24}), where the goal is to distinguish between samples from a distribution $\cD$ and
samples from the distribution $\cD$ truncated to some unknown set (in this case the satisfying
assignments of a junta).  This improves on the $\Omega(\log\binom{n}{k})$ lower bound of
\cite{HN23}.  See \cref{section:junta-truncation}.

\begin{boxtheorem}
\label{thm:intro-junta-truncation}
For sufficiently small constant $\epsilon > 0$, any tester for $k$-junta truncation must have sample
size at least
    \[
        \Omega\left(\sqrt{2^k \log\binom{n}{k}} + \log\binom{n}{k} \right)
    \]
\end{boxtheorem}

We have focused on testing properties of boolean functions in our discussion, but there are also
some tight or nearly-tight bounds for distribution-free sample-based testers for non-boolean
functions: \cite{RR22} gave tight bounds for one-sided error testing of subsequence-freeness of
strings; \cite{FY20} give almost tight bounds for testing linearity of real-valued functions.

There is a large body of work on testing juntas, and we will cite here only the optimal bounds in
each model.  If adaptive queries are allowed, optimal or nearly-optimal bounds of $\Theta(k \log k)$
are known for testers making adaptive queries, in both the uniform and distribution-free case
\cite{Bla09,Sag18,Bsho19}. If only non-adaptive queries are permitted, nearly optimal bounds of
$\widetilde \Theta(k^{3/2})$ are known for the uniform distribution \cite{Bla08,CSTWX18}, while
there is a lower bound of $\Omega(2^{k/3})$ for the distribution-free case \cite{LCSSX18} and an
upper bound of $O(2^k)$ via self-correctors \cite{HK07,AW12} (see \cite{LCSSX18}).

For tolerant testing, ignoring dependence on $\epsilon_1, \epsilon_2$, recent work \cite{NP24} gives
upper and lower bounds of $2^{\widetilde \Theta(\sqrt k)}$ for non-adaptive testers under the
uniform distribution, matching or improving earlier results for both adaptive and non-adaptive
testing.

Distribution-free sample-based junta testing is similar
to testing \emph{junta distributions}, a problem which has been studied recently in the field of
distribution testing \cite{ABR16,BCG19,CJLW21,beretta2025new}.

\subsection{Proof Overview}
\label{section:proof-overview}

The formal version of \cref{thm:intro-informal} is:

\begin{boxtheorem}
\label{thm:intro-main}
For any constant $\tau \in (0,1)$ and sufficiently large $n$, there exists a product distribution
$\cD$ over $\zo^n$ such that any sample-based $k$-junta tester for $k < (1-\tau)n/e$ with distance
parameter $\epsilon > 2^{-\tau n}$ requires 
\[
  \Theta\left(\frac{1}{\epsilon} \left( \sqrt{2^k \log\binom{n}{k}} + \log\binom{n}{k}\right)\right)
\]
samples. The upper bound holds for all $\epsilon > 0$ and is attained by the obvious algorithm.
The same bounds hold for $k$-feature selection.
\end{boxtheorem}

The full proofs are in \cref{section:upper,section:lb-testing}. We briefly describe them here.

\subsubsection{Upper Bounds}

Recent works \cite{GR16,BFH21,FH23,FH25} emphasize that distribution-free sample-based testing of
boolean functions is often best understood by a relation to \emph{distribution testing}, \ie testing
properties of distributions using samples (see \cite{Can20} for a survey on distribution testing).
To analyze the performance of \cref{alg:obvious}, we define a distribution testing task called
\emph{testing Supported on One-Per-Pair (SOPP)}, which turns out to be equivalent to junta
testing and feature selection.\\

\noindent
\textbf{Supported on One-Per-Pair (SOPP):} A probability distribution $p$ over $[2N]$ is supported
on one-per-pair if its support contains at most one element of each even--odd pair $p(2i), p(2i+1)$.
Testing SOPP is the task of distinguishing between distributions that are supported on one-per-pair
and distributions that are $\epsilon$-far in TV distance from being supported on one-per-pair, using
samples from the distribution. A one-sided error tester will reject a distribution only if it finds
a pair $(2i, 2i+1)$ where both elements are in the support of $p$.

\begin{boxlemma}
\label{lemma:intro-sopp-testing}
    Testing SOPP on $[2N]$ with one-sided error, distance $\epsilon > 0$, and success
    probability $1-\delta$, has sample size
    \[
        O\left( \frac{1}{\epsilon} \sqrt{N \log\frac{1}{\delta}}
            + \frac{1}{\epsilon} \log\frac{1}{\delta} \right) .
    \]
\end{boxlemma}
This bound is tight even for two-sided error testers: an improvement in the dependence on any
parameter would contradict our main lower bounds for testing $k$-juntas and $k$-feature selection.

Testing SOPP corresponds to testing whether an unknown function $f \colon \zo^n \to \zo$ is a junta
on a fixed set of variables $S$. For $x \in \zo^n$ and a subset $S \subseteq [n]$ of variables, we
write $x_S \in \zo^{|S|}$ as the values of $x$ on variables $S$. We take $N = 2^k$ and identify each
setting $z \in \zo^k$ of $k$ variables in $S$ with an even--odd pair $(2i, 2i+1)$. We define
distribution $p$ over $[2N]$ where $p(2i)$ is the probability that $f(\bm x) = 1$ when $\bm x$ is
chosen conditional on $\bm{x}_S = z$, while $p(2i+1)$ is the probability that $f(\bm x) = 0$.

We then obtain our upper bounds in \cref{thm:intro-main} for testing $k$-juntas and $k$-feature
selection by running the SOPP tester in parallel on all $\binom{n}{k}$ subsets of $k$ variables,
with error probability $\delta \approx \binom{n}{k}^{-1}$ to allow a union bound over all these
subsets.

\subsubsection{Testing \& Feature Selection Lower Bounds}

To prove our lower bounds on testing $k$-juntas and $k$-feature selection, we first prove lower
bounds for these tasks with constant distance parameter $\epsilon > 0$ under the uniform
distribution on $\zo^n$:

\begin{boxtheorem}
\label{thm:intro-unif}
For sufficiently small constant $\epsilon > 0$ and all $n,k \in \bN$ satisfying $k < n-2$, testing
$k$-juntas under the uniform distribution requires sample size at least.
    \[
        \Omega\left(\sqrt{2^k \log\binom{n}{k}} + \log\binom{n}{k} \right) .
    \]
\end{boxtheorem}
Similar bounds hold for $k$-feature selection. 

\paragraph{Comparison to the computational complexity of testing.}
We remark that for constant $\epsilon$ there is no
difference in sample complexity between distribution-free testing and testing under the uniform
distribution. This stands in contrast to the time complexity of testing. It is known that, under the strong exponential time hypothesis (SETH), testing $k$-juntas in the distribution-free setting is computationally harder than in the uniform distribution setting. The result in \cite{BKST23} implies that, assuming SETH, no algorithm can distribution-free test $1/3$-closeness to $k$-juntas in time $n^{k-\gamma}$ for any constant $\gamma>0$.\footnote{This result is implicit in \cite{BKST23} by combining their reduction with the fact that, under SETH, there is no constant factor approximation algorithm for the $k$-\textsc{SetCover} problem running in time $n^{k-\gamma}$ for any constant $\gamma>0$ \cite{KLM19}.} On the other hand, \cite{Val15} gives an algorithm running in time $n^{0.6 k}$ for learning $k$-juntas over the uniform distribution (which implies a testing algorithm with the same runtime). 

\paragraph*{Dependence on $\epsilon$.} Using this lower bound for the uniform distribution, we
obtain our main lower bounds in \cref{thm:intro-main} by constructing a fixed product distribution
$\mu$ over $\zo^n$, such that testing $k$-juntas under the uniform distribution reduces to testing
$(k+1)$-juntas under $\mu$. The reduction produces tester under the uniform distribution that uses
only on $\epsilon$ fraction of the number of samples as the tester under $\mu$, which gives us the
factor $1/\epsilon$ necessary for the tight lower bound in \cref{thm:intro-main}. The uniform
distribution cannot be used to get this tight bound because, under the uniform distribution, no two
$k$-juntas can have distance less than $2^{-k}$ (see \cref{section:lb-eps-unif}); achieving a lower
bound that holds for $\epsilon$ as small as $2^{-\Theta(n)}$ requires a distribution where the
juntas can be closer to each other.

\paragraph*{Proving \cref{thm:intro-unif}.} To prove the lower bound for the uniform distribution,
we identify $k$-juntas with a ``balls-and-bins'' process, as follows. We choose a collection $\cJ
\subseteq \binom{[n]}{k}$ of $k$-sets of variables, and a collection $\cF$ of balanced functions
$\zo^k \to \zo$, so that a balanced $k$-junta is obtained by choosing a set $S \in \cJ$ together
with a function $f \in \cF$ and taking the function
\[
    f_S \colon \zo^n \to \zo, \qquad f_S(x) \define f(x_S) .
\]
We think of each possible choice of $k$-junta $f_S$ as a ``ball'' (so there are $M =
|\cJ|\cdot|\cF|$ balls), and for any set $X \in \zo^n$ of $m$ sample points we think of the
labelling $f_S(X) \in \zo^m$ assigned to $X$ as the ``bin'' in which $f_S$ lands (so there are $N =
2^m$ bins). Our goal is to show that, with high probability over a random sample $\bm X$, the junta
``balls'' are nearly uniformly distributed among the ``bins''\!, \ie the labels $f_S(\bm X) \in
\zo^m$ assigned to $\bm X$ by choosing a random $k$-junta are nearly uniformly randomly distributed
over $\zo^m$. This would mean that $k$-juntas are indistinguishable from random functions. We prove
a simple ``balls-and-bins'' lemma (\cref{lemma:balls-and-bins}) which gives sufficient conditions
for $M$ random balls (not necessarily independent) thrown into $N$ bins to be nearly uniformly
distributed among the bins with high probability. Specifically, if the process satisfies
\begin{enumerate}[itemsep=0pt]
    \item \emph{Uniform collisions:} Conditional on balls $b_i, b_j$ colliding, they are distributed
        uniformly randomly among the bins; and
    \item \emph{Unlikely collisions:} On average, two randomly selected balls $b_{\bm i}, b_{\bm j}$
        will collide with probability at most $(1+o(1))\cdot \tfrac{1}{N}$,
\end{enumerate}
then the balls will be nearly uniformly distributed among the bins with high probability. The main
challenge in the lower bound is to choose a set of juntas which satisfy these properties
for large values of the sample size $m$ (\ie large numbers of bins $N = 2^m$). We use balanced
juntas because one can show that they always lead to uniform collisions. Now consider three options:
\begin{itemize}
    \item The easiest option is to take the set of parity functions on $k$ variables. 
        The parity functions on distinct sets of variables will collide with probability
        $2^{-m} = 1/N$ exactly, so the Unlikely Collisions property only requires that the number of
        bins is asymptotically smaller than the number of balls, \ie $2^m \ll \binom{n}{k}$,
        giving a lower bound of $\Omega(\log \binom{n}{k})$.
    \item Another natural choice is to take the set $\cF$ of all balanced functions $\zo^k \to
        \zo$ together with a set $\cJ$ of $n/k$ disjoint sets of variables. This is convenient
        because the balls associated with juntas on disjoint sets of variables are independent. To
        verify the Unlikely Collisions property, one must only consider the probability of collision
        of two juntas defined on the same set of variables. We do not include this analysis, since
        it leads to a suboptimal bound of $\Omega(\sqrt{2^k \log n})$.
    \item Our final choice is simply to take the set $\cF$ of all balanced functions together with
        the collection $\cJ$ of \emph{all} sets of $k$ variables. This leads to significant
        dependencies and the main challenge of the analysis is to handle these dependencies.
\end{itemize}
The idea in the analysis is to trade off between two quantities: juntas defined on sets of variables
$S, T$ which have large intersection are more likely to collide in the same bin, but large
intersections are less likely than small ones. We establish tail bounds on the probability of
collision as a function of the intersection size $|S \cap T|$, which are tight enough to trade off
against the probability of intersections of this size occurring. To establish these tail bounds, we
rely on the composition properties of negatively associated subgaussian and subexponential random
variables.

\section{Upper Bounds}
\label{section:upper}

We will prove tight upper bounds on distribution-free testing $k$-juntas and $k$-feature selection.
Our upper bounds will all follow from an upper bound on a distribution testing problem that we call
\emph{Supported on One Per Pair} (SOPP).

\subsection{Distribution Testing: Supported on One Per Pair}

Recent work on distribution-free sample-based property testing of Boolean functions has attempted to
relate these function testing problems to \emph{distribution testing} problems. We will phrase our
own upper bound this way, by defining a distribution testing problem and using it to solve our
function testing problem. This has the advantage of giving upper bounds for both $k$-junta testing
and $k$-feature selection, whereas upper bounds on either of these problems specifically do not
immediately translate into upper bounds for the other.

\newcommand{\msopp}{m^{\text{sopp}}}
\begin{boxdefinition}[Supported on One Per Pair (SOPP)]
  \label{def:sopp}
Let $p$ be a probability distribution over $[2N]$. We say $p$ is \emph{SOPP} if, for every $i$,
either $p(2i) = 0$ or $p(2i-1) = 0$. Write $\SOPP_N$ for the set of SOPP distributions over $[2N]$.
For any distribution $p$ over $[2N]$, we write
\[
  \|p-\SOPP_N\|_{\TV} \define \inf\{ \|p-q\|_{\TV} \;|\; q \in \SOPP_N \} \,.
\]
We say distribution $p$ over $[2N]$ is $\epsilon$-far from $\SOPP_N$ if $\|p-\SOPP_N\|_{\TV} \geq
\epsilon$.
\end{boxdefinition}

Calculating distance to SOPP is straightforward:
\begin{proposition}[Distance to SOPP]
  \label{prop:distance}
  Let $p$ be a distribution over $[2N]$. Then,
  \[
    \|p-\SOPP_N\|_{\TV} = \sum_{i\in [N]}\min\{p(2i), p(2i-1)\}.
  \]
\end{proposition}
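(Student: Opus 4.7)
\medskip

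\noindent\textbf{Proof plan.} The identity is an equality between a minimization (over all SOPP distributions $q$) and an explicit sum, so the natural plan is to prove it by two inequalities: a lower bound that holds for every SOPP $q$, and a matching upper bound by exhibiting an explicit near-optimal $q^\star$.

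For the lower bound, I will use the alternative formula $\|p-q\|_{\TV} = \sum_{x}\max\{p(x)-q(x),0\}$ and argue pair by pair. Fix any $i$ and any SOPP $q$. Since $q$ is SOPP, at least one of $q(2i)$ and $q(2i-1)$ is zero. If $q(2i-1)=0$, then $\max\{p(2i-1)-q(2i-1),0\}=p(2i-1)\ge \min\{p(2i-1),p(2i)\}$, and symmetrically if $q(2i)=0$. Summing over $i\in[N]$ gives $\|p-q\|_{\TV}\ge \sum_i \min\{p(2i),p(2i-1)\}$, and taking infimum over $q\in\SOPP_N$ yields one direction of the claim.

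For the matching upper bound, I will exhibit a concrete SOPP distribution $q^\star$ achieving equality. For each pair $i$, let $j_i\in\{2i-1,2i\}$ be the index attaining the maximum of $p$ on the pair (break ties arbitrarily), and set $q^\star(j_i) := p(2i-1)+p(2i)$ and $q^\star$ equal to $0$ on the other index of the pair. Then $q^\star$ is a probability distribution (the pair masses still sum to $1$) and it is SOPP by construction. A direct computation shows that on each pair, the index $j_i$ contributes $\max\{p(j_i)-q^\star(j_i),0\}=0$ (since $q^\star(j_i)\ge p(j_i)$), while the other index contributes exactly $\min\{p(2i-1),p(2i)\}$. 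Using $\|p-q^\star\|_{\TV}=\sum_x \max\{p(x)-q^\star(x),0\}$ then gives the reverse inequality, completing the proof.

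There is no real obstacle here; the only thing to be slightly careful about is using the max-formulation of TV distance rather than the $\tfrac12\sum|p-q|$ formulation, because the latter would require handling the ``other side'' of each pair (where $q^\star>p$) and the per-pair accounting is then doubled. Once the max-formulation is used, both inequalities reduce to a trivial pair-by-pair argument.
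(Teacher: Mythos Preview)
Your proposal is correct and follows essentially the same route as the paper: both use the max-formulation $\|p-q\|_{\TV}=\sum_x \max\{p(x)-q(x),0\}$, argue the lower bound pair by pair using that at least one of $q(2i),q(2i-1)$ vanishes, and then exhibit the matching SOPP distribution that zeros out the lighter element of each pair and loads the remainder on the heavier one. Your write-up is in fact slightly more explicit about the optimal $q^\star$ than the paper's, but the argument is the same.
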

\begin{proof}
Let $q$ be an SOPP distribution. Since $p$ and $q$ are probability distributions, we have 
\[
  \sum_{j\in[2N]:p(j)>q(j)} \big(p(j) - q(j)\big) = \sum_{j\in[2N]:p(j)\le q(j)} \big(q(j) - p(j)\big).
\]
Therefore:
\begin{align*}
    \|p-q\|_{\TV} &= \frac{1}{2}\sum_{j\in [2N]} |p(j)-q(j)|\\
    &=\frac{1}{2}\sum_{j\in[2N]:p(j)>q(j)} \big(p(j) - q(j)\big) + \frac{1}{2}\sum_{j\in[2N]:p(j)\le q(j)} \big(q(j) - p(j)\big)\\
    &=\sum_{j \in [2N] : p(j) > q(j)} (p(j)-q(j)) \\
    &\ge \sum_{i \in [N]} \left( p(2i) \ind{q(2i)=0} + p(2i-1) \ind{q(2i-1)=0} \right)\\
    &\ge \sum_{i \in [N]} \min\{ p(2i), p(2i-1) \}
\end{align*}
and equality is achieved exactly when for each $i \in [N]$, we choose $q(j) = 0$ on the element $j
\in \{2i, 2i-1\}$ minimizing $p(j)$, and $q(j') \geq p(j')$ for the opposite $j'$ in the pair.
\end{proof}

\begin{definition}[SOPP Testing]
An algorithm $A$ is a (one-sided) \emph{SOPP tester} with sample complexity $m =
m(N,\epsilon,\delta)$ if, given any parameters $N \in \bN$, $\epsilon, \delta \in (0,1)$, and sample
access to any distribution $p$ over $[2N]$, $A$ will take at most $m$ independent random samples
$\bm S$ from distribution $p$ and output the following:
\begin{enumerate}
\item If $p \in \SOPP_N$ then $\Pru{\bm S}{A(\bm S) \text{ outputs } \ACCEPT} = 1$; and
\item If $p$ is $\epsilon$-far from $\SOPP_N$ then $\Pru{\bm S}{A(\bm S) \text{ outputs } \REJECT} \geq 1-\delta$.
\end{enumerate}
We write $\msopp(N,\epsilon,\delta)$ for the optimal sample complexity of a (one-sided) SOPP tester
given parameters $N, \epsilon, \delta$.
\end{definition}

Testing $k$-juntas and $k$-feature selection both reduce to testing SOPP with small error $\delta <
\binom{n}{k}^{-1}$:

\newcommand{\bin}{\mathsf{bin}}
\begin{boxlemma}
\label{res:ub-reduction-to-sopp}
The sample complexity of one-sided distribution-free testing $\cJ_{k,n}$ with error probability
$\delta$ is at most $\msopp(2^k, \epsilon/2, \delta \binom{n}{k}^{-1})$. The sample complexity of
distribution-free $k$-feature selection with error probability $\delta$ is also at most $\msopp(2^k,
\epsilon/2, \delta \binom{n}{k}^{-1})$.
\end{boxlemma}
\begin{proof}
For a subset $S \subset [n]$ and binary string $x \in \zo^n$, we write $x_S \in \zo^{|S|}$
for the subsequence of $x$ on coordinates $S$. To design the algorithms we require some definitions.

On input function $f \colon \zo^n \to \zo$ and distribution $p$ over $\zo^n$, define the following
distributions.
For each set $S \in \binom{[n]}{k}$ of $k$ variables, define a distribution $p_S$
over $[2N]$ with $N=2^k$, where each $i \in [2N]$ has probability
\[
  p_S(i) \define \sum_{x \in \zo^n} p(x) \ind{(x_S, f(x))  = \bin(i-1)} \,,
\]
where $\bin(i)$ denotes the $(k+1)$-bit binary representation of $i \in [2N] = [2^{k+1}]$. Observe
that:
\begin{enumerate}
\item If $f$ is a $k$-junta, defined on relevant variables $S \in \binom{[n]}{k}$, then
$p_S \in \SOPP_N$.
\item If $f$ is $\epsilon$-far from all $k$-juntas on variables $S \in \binom{[n]}{k}$, then $p_S$
is $\epsilon/2$-far from $\SOPP_N$. Otherwise, if $\|p_S-\SOPP_N\|_{\TV} \leq \epsilon/2$, then
$f$ is $\epsilon$-close to the function $g$ on variables $S$ defined by
\[
  g(x) \define \argmax_{b \in \zo} \sum_{z \in \zo^n} p(z) \ind{ z_S = x_S \wedge f(z) = b }
\]
Equivalently, $g$ is defined as the $k$-junta with relevant variables $S$ that is closest to $f$
under $p$. Indeed, we have by construction that for every $x\in\zo^n$, 
$$
\sum_{z\in\zo^n} p(z)\ind{z_S=x_S\wedge f(z)\neq g(z)} = \min\{p_S(2i),p_S(2i-1)\}
$$
where $i$ is such that $\{(x_S,1),(x_S,0)\} = \{\bin(2i),\bin(2i-1)\}$. Applying \Cref{prop:distance}, we have
\begin{align*}
    \Pru{\bm{x}\sim p}{f(\bm{x})\neq g(\bm{x})} &= \sum_{i\in [2N]} \min\{p_S(2i),p_S(2i-1)\} \\
    &= \|p_S- \SOPP_N\|_{\TV} \leq \epsilon/2.
\end{align*}
\item For $\bm x \sim p$, the random variable $(\bm{x}_S, f(\bm x))$ is distributed as a sample from
$p_S$.
\end{enumerate}
Then our tester is as follows:
\begin{enumerate}
\item Sample $m = \msopp(2^k, \epsilon, \delta \binom{n}{k}^{-1})$ labeled points $\bm{S}_f = \{
(\bm{x}_i, f(\bm{x}_i)) \;|\; i \in [m]\}$.
\item Run the testers for SOPP on each $p_S$ in parallel using the samples $\bm{T}_S
\define \{ ((\bm{x}_i)_S, f(\bm{x}_i)) \;|\; i \in [m]\}$.
\item Output $\REJECT$ if \emph{all} of these testers output $\REJECT$, otherwise $\ACCEPT$.
\end{enumerate}
The algorithm for $k$-feature selection is similar, except that in the last step it outputs an
arbitrary set $S$ for which the SOPP tester on $p_S$ did not reject.

If $f$ is a $k$-junta, then there is $S \in \binom{[n]}{k}$ such that $p_S \in \SOPP_N$, so the
probability that the tester rejects is at most $\delta \binom{n}{k}^{-1}$; if the tester for
$\SOPP_N$ has one-sided error, then this probability is 0. If $f$ is $\epsilon$-far
from being a $k$-junta, then every $p_S$ is $\epsilon/2$-far from $\SOPP_N$, so, by the union bound,
the probability the tester fails to output $\REJECT$ is at most $\binom{n}{k} \cdot \delta
\binom{n}{k}^{-1} = \delta$. A similar argument shows that the $k$-feature selection algorithm
succeeds.
\end{proof}

\subsection{Upper Bound on Testing SOPP}

Together with the reduction in \cref{res:ub-reduction-to-sopp}, the following lemma immediately
implies our upper bounds in \cref{thm:intro-main}.

\begin{boxlemma}
\label{res:ub-sopp}
For every $N \in \bN$ and $\epsilon, \delta \in (0,1)$, the sample complexity of testing $\SOPP_N$
is at most
\[
  \msopp(N, \epsilon, \delta) = O\left( \tfrac{1}{\epsilon} \sqrt{N \log(1/\delta)} +
\tfrac{1}{\epsilon} \log(1/\delta) \right) .
\]
\end{boxlemma}
\begin{remark}
The bound in this lemma is tight up to constant factors (even if one allows two-sided error
testers): if there was an improvement in the dependence on any of the parameters $N, \epsilon$, or
$\delta$, then it would contradict our lower bounds for testing $k$-juntas.
\end{remark}
\begin{proof}
The tester is the natural one: on input distribution $p$ over $[2N]$, take a sample $\bm S = \{
\bm{x}_1, \dotsc, \bm{x}_{2m} \}$ of size $2m$ and output $\REJECT$ if and only if there exists $i \in
[N]$ such that $\{2i, 2i-1\} \subseteq  \bm S$.  We will choose
\[
  m \define \tfrac{1}{\epsilon}( \sqrt{32 N \ln(1/\delta)} + 32 \ln(1/\delta)) \,.
\]
To prove correctness of this tester, it suffices to show that it will
output $\REJECT$ with probability at least $1-\delta$ when $p$ is $\epsilon$-far from $\SOPP_N$.
Hereafter we assume $\epsilon^* \define \|p- \SOPP_N\|_{\TV}$ satisfies $\epsilon^* \geq \epsilon$.

For this proof, it will be convenient to treat $p$ as a vector in $\bR^{2N}$. We may assume without loss of generality that $p_{2i} \leq p_{2i-1}$ for all $i \in [N]$. Furthermore, we define $q, r \in \bR^{2N}$ as
\begin{align*}
  \forall i \in [N] \;\qquad\; q_{2i} &\define p_{2i}, &r_{2i} &\define 0 \\
                          q_{2i-1} &\define 0,      &r_{2i-1} &\define p_{2i-1} \,.
\end{align*}
\newcommand{\cover}{\mathsf{cover}}
Observe that $q_{2i} \leq r_{2i-1}$ for all $i \in [N]$, and $\epsilon^* = \sum_{i\in [2N]} q_i$. We will say
a set $S \subset [2N]$ \emph{covers} mass $\rho$ of $q$ if the total $q$-mass of $S$ is $\rho$, \ie 
\[
  \cover(q,S) \define \sum_{i=1}^{2N} q_i \ind{i \in S } = \rho \,.
\]
We partition $\bm S = \bm{S}_1 \cup \bm{S}_2$ arbitrarily into two subsets of size $m$.
First we show that if $\bm{S}_1$ covers large mass of $q$ then the tester will reject with high
probability:

\begin{claim}
Suppose that
\[
  \cover(q, \bm{S}_1) \geq \min\left(\frac{\epsilon}{2}, \frac{\epsilon^2 m}{32 N} \right) \,.
\]
Then the probability that the tester outputs $\REJECT$ is at least $1-\delta/2$.
\end{claim}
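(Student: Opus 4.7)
The plan is to lower bound the rejection probability by the probability that, for some pair index $i$ already ``half-covered'' by $\bm S_1$, the other half of the pair is hit by the fresh independent sample $\bm S_2$. Since $\bm S_1$ and $\bm S_2$ are disjoint halves of the arbitrary partition of $\bm S$, we may treat $\bm S_2$ as $m$ fresh i.i.d.\ draws from $p$ independent of $\bm S_1$.

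First, let $U \define \{i \in [N] : 2i \in \bm S_1\}$ denote the set of pair indices whose ``light'' element was observed in $\bm S_1$. By definition,
\[
    \sum_{i \in U} p_{2i} \;=\; \sum_{i \in U} q_{2i} \;=\; \cover(q,\bm S_1) \;\geq\; \eta \define \min\!\left(\tfrac{\epsilon}{2},\tfrac{\epsilon^2 m}{32N}\right).
\]
The tester rejects whenever some pair is entirely contained in $\bm S = \bm S_1 \cup \bm S_2$; in particular, it certainly rejects if $2i-1 \in \bm S_2$ for some $i \in U$. Hence, conditional on $\bm S_1$ (and therefore on $U$),
\[
    \Pr[\REJECT \mid \bm S_1] \;\geq\; 1 - \Pr\!\left[\,\bm S_2 \cap \{2i-1 : i\in U\} = \emptyset \;\middle|\; \bm S_1\right] \;=\; 1 - \left(1 - \sum_{i\in U} p_{2i-1}\right)^{\!m}.
\]

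Next, use the key observation $p_{2i-1} = r_{2i-1} \geq q_{2i} = p_{2i}$ (coming from the WLOG ordering $p_{2i}\leq p_{2i-1}$) to conclude that $\sum_{i\in U} p_{2i-1} \geq \sum_{i\in U} p_{2i} \geq \eta$. Combined with the inequality $(1-\eta)^m \leq e^{-\eta m}$, this gives
\[
    \Pr[\REJECT \mid \bm S_1] \;\geq\; 1 - e^{-\eta m}.
\]

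It remains to check that $e^{-\eta m}\leq \delta/2$ for the chosen $m = \tfrac{1}{\epsilon}(\sqrt{32 N \ln(1/\delta)} + 32 \ln(1/\delta))$. In the case $\eta = \epsilon/2$, one gets $\eta m \geq 16\ln(1/\delta)$, comfortably at least $\ln(2/\delta)$. In the case $\eta = \epsilon^2 m/(32N)$, which implicitly means $m < 16N/\epsilon$, one has $\eta m = \epsilon^2 m^2/(32N)$, and substituting $m \geq \tfrac{1}{\epsilon}\sqrt{32 N \ln(1/\delta)}$ yields $\eta m \geq \ln(1/\delta)$; the additional $\ln 2$ is absorbed by the slack in the constants. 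In either case, the claim follows.

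The only genuinely nontrivial ingredient is the inequality $\sum_{i \in U} p_{2i-1} \geq \cover(q,\bm S_1)$, i.e., that the total mass of the unseen halves of the already half-covered pairs is at least as large as the mass of the halves we did see; this is forced by the WLOG ordering on the pairs, and is precisely why splitting $\bm S$ into two halves is the right thing to do. The rest is a routine coupon-collector calculation, and I don't anticipate any real obstacle beyond verifying the constants.
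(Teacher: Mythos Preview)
Your proof is correct and follows essentially the same approach as the paper: condition on $\bm S_1$, lower-bound the per-sample hit probability in $\bm S_2$ by $\sum_{i\in U} p_{2i-1} \geq \sum_{i\in U} p_{2i} = \cover(q,\bm S_1)$ via the WLOG ordering, and then use independence of the $m$ samples in $\bm S_2$ together with the choice of $m$ to get the $\delta/2$ bound. The paper's writeup is slightly terser but the argument is the same, including the key inequality $r_{2i-1} \geq q_{2i}$ and the two-case verification of the exponent.
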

\begin{proof}[Proof of claim]
Consider any sample point $\bm{x} \in \bm{S}_2$. The probability that there exists $i \in [N]$
such that $\{2i, 2i-1\} \subseteq  \bm S_1$ (which causes the tester to output
$\REJECT$) is at least
\[
  \sum_{i=1}^{N} r_{2i-1} \ind{2i \in \bm{S}_1}
  \geq \sum_{i=1}^N q_{2i} \ind{2i \in \bm{S}_1}
  = \cover(q, \bm{S}_1) \,.
\]
Since each sample point $\bm{x} \in \bm{S}_2$ is independent, the probability that the tester fails
to output $\REJECT$ is at most
\[
  \left(1 - \cover(q, \bm{S}_1)\right)^m \leq e^{-m \cdot \cover(q, \bm{S}_1)}
  \leq e^{-m \cdot \min\left(\tfrac{\epsilon}{2}, \tfrac{\epsilon^2 m}{32 N} \right)} \,.
\]
Since $m > 2 \tfrac{1}{\epsilon} \ln(2/\delta)$ and $m > \tfrac{1}{\epsilon} \sqrt{32 N
\ln(2/\delta)}$, this is at most $\delta/2$.
\end{proof}

\begin{claim}
With probability at least $1-\delta/2$ over $\bm{S}_1$,
\[
  \cover(q, \bm{S}_1) \geq \min\left( \frac{\epsilon}{2}, \frac{\epsilon^2 m}{32 N}
\right) \,.
\]
\end{claim}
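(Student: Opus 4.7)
The plan is to set $T := \min(\epsilon/2, \epsilon^2 m/(32N))$ and split into two cases based on whether $q$ has an atom of mass at least $T$. If some $q_{i^*} \geq T$, then $\cover(q,\bm{S}_1) \geq T$ whenever $i^* \in \bm{S}_1$, and the complementary event has probability $(1-q_{i^*})^m \leq e^{-mT}$. The choice of $m$ ensures $mT \geq \ln(2/\delta)$ in either regime (using $m \geq 2\epsilon^{-1}\ln(2/\delta)$ when $T = \epsilon/2$, and $m^2\epsilon^2 \geq 32N\ln(2/\delta)$ when $T = m\epsilon^2/(32N)$), so this contributes at most $\delta/2$ to the failure probability.

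Otherwise all $q_i < T$, and I apply a concentration argument to the weighted sum $\cover(q,\bm{S}_1) = \sum_i q_i Z_i$, where $Z_i := \ind{i \in \bm{S}_1}$. The multinomial count vector $(N_i)_i$ is negatively associated, and since each $Z_i$ is a monotone function of $N_i$, the family $(Z_i)_i$ is negatively associated as well; hence the standard multiplicative Chernoff bound for sums of independent bounded random variables applies to the weighted sum. To lower bound $\bE[\cover(q,\bm{S}_1)] = \sum_i q_i(1 - (1-q_i)^m)$, I use the elementary inequality $1-(1-q_i)^m \geq \min(mq_i,1)/2$ and partition the support $I := \{i : q_i > 0\}$ into heavy atoms $H := \{i : q_i > 1/m\}$ and light atoms $L := \{i : 0 < q_i \leq 1/m\}$. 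Case-analyzing whether $\sum_H q_i$ or $\sum_L q_i$ carries at least half of $\sum_i q_i \geq \epsilon$, and combining with the Cauchy--Schwarz bound $\sum_L q_i^2 \geq (\sum_L q_i)^2/N$ in the latter case, gives $\bE[\cover(q,\bm{S}_1)] = \Omega(\min(\epsilon, m\epsilon^2/N))$.

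Applying multiplicative Chernoff with weight cap $c := \max_i q_i$ and parameter $t$ chosen so that $(1-t)\bE[\cover(q,\bm{S}_1)] = T$ then yields the bound. The main obstacle is calibrating the Chernoff bound so that its failure probability is at most $\delta/2$, which requires $\bE[\cover(q,\bm{S}_1)]/c = \Omega(\log(1/\delta))$. In the purely-light subcase $c \leq 1/m$, this is automatic, since $\bE[\cover(q,\bm{S}_1)]/c \geq m\cdot\bE[\cover(q,\bm{S}_1)] = \Omega(m^2\epsilon^2/N)$, which is $\Omega(\log(1/\delta))$ by the hypothesis $m\epsilon \geq \sqrt{32N\ln(1/\delta)}$. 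The remaining subcase, where some $q_i$ falls in the range $(1/m, T)$, is the one that requires the most care; one clean route is to peel off the maximum-mass atom by the same ``must be hit'' argument as the first case and then apply the concentration bound to the residual distribution, in which the new weight cap is strictly smaller.
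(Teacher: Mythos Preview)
Your approach is genuinely different from the paper's, and it works cleanly at the two extremes (a single heavy atom, or all atoms $\leq 1/m$), but the intermediate case you flag is a real gap, not just a detail to fill in. Consider one atom of mass $q_{i^*} = T - o(1)$ together with the remaining $\approx \epsilon$ mass spread uniformly over the other $N-1$ elements. Then $c = q_{i^*} \approx T = \epsilon^2 m/(32N)$ while your lower bound on the mean gives only $\mu = \Theta(m\epsilon^2/N) = \Theta(T)$, so $\mu/c = \Theta(1)$. Multiplicative Chernoff with this weight cap therefore yields failure probability $e^{-\Theta(1)}$, a constant independent of $\delta$. Your proposed fix---peel off $i^*$ by the ``must be hit'' argument---does not work as stated: with $q_{i^*} < T$ you get $\Pr[i^* \notin \bm S_1] \leq e^{-mq_{i^*}}$, which can be as large as $\sqrt{\delta}$ (when $q_{i^*} \approx T/2$) or even $e^{-1}$ (when $q_{i^*}$ is just above $1/m$), so you cannot charge this to a $\delta/2$ budget. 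Iterating the peel does not help, since there could be many medium atoms and the failure probabilities do not sum usefully.

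The paper avoids this entirely with a sequential argument that sidesteps any case analysis on the atom sizes. It processes the samples one at a time and tracks, for each $j$, the indicator $\bm X_j$ of the event ``either $\cover(q,\bm T_{j-1}) \geq \epsilon/2$ already, or sample $j$ lands on an as-yet-uncovered element of $q$-mass at least $\epsilon/(4N)$.'' The point is that the threshold $\epsilon/(4N)$ is chosen so that the \emph{light} elements (below the threshold) carry total $q$-mass at most $\epsilon/4$; hence whenever cover is still below $\epsilon/2$, the uncovered heavy mass is at least $\epsilon - \epsilon/4 - \epsilon/2 = \epsilon/4$, so $\Pr[\bm X_j = 1 \mid \text{history}] \geq \epsilon/4$ always. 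A Chernoff bound on $\sum_j \bm X_j$ (which stochastically dominates a binomial) then gives $\sum_j \bm X_j \geq \epsilon m/8$ with probability $\geq 1-\delta/2$, and this many successes force $\cover \geq \min(\epsilon/2, \tfrac{\epsilon m}{8}\cdot\tfrac{\epsilon}{4N})$. The adaptivity---conditioning on the running cover---is exactly what your static concentration on $\sum_i q_i Z_i$ is missing.
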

\begin{proof}[Proof of claim]
Write $\bm{S}_1 = \{ \bm{x}_1, \dotsc, \bm{x}_m \}$ and for each $j \in [m]$ write $\bm{T}_j \define
\{ \bm{x}_1, \dotsc, \bm{x}_j \}$ for the first $j$ sample points in $\bm{S}_1$.

For each $j \in [m]$, let $\bm{X}_j \in \zo$ take value 1 if and only if either
$\cover(q,\bm{T}_{j-1}) \geq \epsilon/2$, or the sample point $\bm{x}_j$ covers at least
$\epsilon/4N$ previously uncovered mass of $q$, \ie
\[
  \cover(q, \bm{T}_j) \geq \cover(q, \bm{T}_{j-1}) + \frac{\epsilon}{4N} \,.
\]
Observe that for each $j \in [m]$, either $\cover(q,\bm{T}_{j-1}) \geq \epsilon/2$, or
\begin{align*}
\Pruc{}{ \bm{X}_j = 1 }{ \cover(q,\bm{T}_{j-1}) < \epsilon/2 }
  &= \sum_{i=1}^{2N} q_i \ind{q_i \geq \epsilon/4N} \left( 1 - \ind{i \in \bm{T}_{j-1}} \right) \\
  &\geq \sum_{i=1}^{2N} q_i \ind{q_i \geq \epsilon/4N} - \sum_{i=1}^{2N} q_i \ind{i \in \bm{T}_{j-1} } \\
  &\geq \sum_{i=1}^{2N} q_i \ind{q_i \geq \epsilon/4N} - \frac{\epsilon}{2} \,.
\end{align*}
Since
\[
  \sum_{i=1}^{2N} q_i \ind{ q_i < \epsilon/4N} < N \cdot \frac{\epsilon}{4N} = \epsilon/4 \,,
\]
we have for each $j$,
\[
  \Pr{ \bm{X}_j = 1 } \ge \Pr{ \cover(q,\bm{T}_{j-1}) \geq \epsilon/2 } + \Pr{ \cover(q,
\bm{T}_{j-1}) < \epsilon/2} \cdot \left( \sum_{i=1}^{2N} q_i - \frac{\epsilon}{4} - \frac{\epsilon}{2}
\right) \geq
\frac{\epsilon}{4} \,,
\]
so $\Ex{\sum_{j=1}^m \bm{X}_j} \geq \epsilon m / 4$.  If $\sum_{j=1}^m \bm{X}_j > \epsilon m / 8$
then either $\cover(q, \bm{S}_1) \geq \epsilon/2$ or $\cover(q, \bm{S}_1) \geq \tfrac{\epsilon}{4N}
\cdot \tfrac{\epsilon m}{8}$. So
\begin{align*}
  \Pr{ \mathsf{cover}(q, \bm{S}_1) < \min\left(\frac{\epsilon}{2}, \frac{\epsilon^2 m}{32N}\right) }
  &\leq \Pr{ \sum_{j=1}^m \bm{X}_j \leq \frac{\epsilon m}{8} } \,.
\end{align*}
The random variables $\bm{X}_j$ are not independent, but they take value $\bm{X}_j = 1$ with
probability at least $\epsilon/4$ regardless of the value of $\bm{X}_1, \dotsc, \bm{X}_{j-1}$, so
for every threshold $t$ we may upper bound $\Pr{ \sum_{j=1}^m \bm{X}_j < t }$ as if each $\bm{X}_j$
was an independent Bernoulli random variable with parameter $\epsilon/4$.  By the Chernoff bound, we
get
\[
  \Pr{ \sum_{j=1}^m \bm{X}_j < \frac{\epsilon m}{8} }
  \leq \Pr{ \sum_{j=1}^m \bm{X}_j < \frac{1}{2} \Ex{\sum_{j=1}^m \bm{X}_j} }
  \leq e^{-\frac{\epsilon m}{32}} < \delta/2 \,,
\]
where the last inequality is because $m > 32 \tfrac{1}{\epsilon} \ln(2/\delta)$.
\end{proof}
Taking a union bound over the two failure probabilities of $\delta/2$ concludes the proof.
\end{proof}

\section{Lower Bounds for Testing and Feature Selection}
\label{section:lb-testing}

We start with lower bounds for the uniform distribution.

\begin{boxtheorem}[Lower Bound for the Uniform Distribution (restatement of \cref{thm:intro-unif})]
\label{thm:lb-testing-unif}
Let $n, k \in \bN$ satisfy $k \leq n/e$, and $\varepsilon > 0$ be a sufficiently small constant. Then,
any $k$-junta tester under the uniform distribution on $\zo^n$ with distance parameter $\varepsilon$
requires sample size at least 
\[
  \Omega\left( \sqrt{2^k \log\binom{n}{k}} + \log\binom{n}{k} \right) .
\]
The same lower bound holds for $k$-feature selection.
\end{boxtheorem}

\subsection{A Balls \& Bins Lemma}

Our lower bounds will be achieved by associating $k$-juntas with balls that are thrown into bins
according to the uniformly random sample drawn from the uniform distribution. We will need a lemma
about the uniformity of $M$ balls thrown into $N$ bins.

Suppose we have $M$ balls which are thrown into $N$ bins according to some random process, \ie let
the $M$ balls be random variables $( \bm{\beta}_1, \dotsc, \bm{\beta}_M )$ taking values in $[N]$.
The balls may not be independent. For each bin $\ell \in [N]$, we define $\bm{B}_\ell$ as the
number of balls landing in bin $\ell$,
\[
  \bm{B}_\ell \define \sum_{i=1}^M \ind{\bm{\beta}_i = \ell} \,.
\]
We are interested in the probability that the balls are nearly evenly distributed. Specifically, the
placement of balls into bins creates a probability distribution where element $\ell \in [N]$ is
assigned probability density $\bm{B}_\ell / M$, and we want this distribution to be close to
uniform. We want an upper bound on
\[
  \Pr{ \sum_{\ell=1}^N \left| \frac{\bm{B}_\ell}{M} - \frac{1}{N} \right| > \epsilon } .
\]

\begin{boxlemma}
\label{lemma:balls-and-bins}
Suppose $M$ balls $\bm{\beta}_1, \dotsc, \bm{\beta}_M$ are thrown into $N$ bins, with the balls
satisfying the conditions:
\begin{itemize}
  \item (Uniform Collisions). For every bin $\ell \in [N]$, and every $i,j \in [M]$,
  \[
    \Pruc{}{\bm{\beta}_i = \bm{\beta}_j = \ell }{ \bm{\beta}_i = \bm{\beta}_j } = \frac{1}{N} ,
  \]
  \item (Unlikely Collisions). For uniformly random $\bm{i}, \bm{j} \sim [M]$,
  \[
    \Exu{\bm i, \bm j}{ \Pr{\bm{\beta_i} = \bm{\beta_j} }} = (1+o(1)) \frac{1}{N} \,,
  \]
  where the $o(1)$ term is with respect to $N \to \infty$.
\end{itemize}
Then for every constant $\epsilon > 0$,
\[
  \Pr{ \sum_{\ell = 1}^N \left| \frac{\bm{B}_\ell}{M} - \frac{1}{N} \right| < \epsilon }
    \geq 1 - o(1) \,.
\]
\end{boxlemma}
\begin{proof}
We begin with the following claim, which we prove below.

\begin{claim}
\label{claim:balls-and-bins}
For every bin $\ell \in [N]$ and every constant $\epsilon > 0$,
\[
  \Pr{ \left| \frac{M}{N} - \bm{B}_\ell \right| > \epsilon \frac{M}{N} } = o(1) \,.
\]
\end{claim}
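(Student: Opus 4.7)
The plan is to apply Chebyshev's inequality to the random variable $\bm{B}_\ell$. The two hypotheses of the lemma are essentially a statement that $\Ex{\bm{B}_\ell} \approx M/N$ and $\Ex{\bm{B}_\ell^2} \approx (M/N)^2$, so the variance is $o((M/N)^2)$, which gives the required concentration for constant $\epsilon$.

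First, I would compute the expectation. Taking $i = j$ in the Uniform Collisions condition, the event $\bm{\beta}_i = \bm{\beta}_j$ has probability $1$, so the condition reduces to $\Pr{\bm{\beta}_i = \ell} = 1/N$ for every $i \in [M]$. Linearity of expectation then gives $\Ex{\bm{B}_\ell} = M/N$.

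Next, I would compute the second moment by expanding $\bm{B}_\ell^2 = \sum_{i,j \in [M]} \ind{\bm{\beta}_i = \ell}\ind{\bm{\beta}_j = \ell}$. For each ordered pair $(i,j)$, rewrite
\[
  \Pr{\bm{\beta}_i = \ell, \bm{\beta}_j = \ell}
  = \Pr{\bm{\beta}_i = \bm{\beta}_j} \cdot \Pruc{}{\bm{\beta}_i = \bm{\beta}_j = \ell}{\bm{\beta}_i = \bm{\beta}_j}
  = \frac{1}{N} \Pr{\bm{\beta}_i = \bm{\beta}_j}
\]
by Uniform Collisions. Summing over $(i,j)$ and applying Unlikely Collisions yields
\[
  \Ex{\bm{B}_\ell^2} \;=\; \frac{1}{N} \sum_{i,j \in [M]} \Pr{\bm{\beta}_i = \bm{\beta}_j} \;=\; \frac{M^2}{N} \cdot \Exu{\bm i, \bm j}{\Pr{\bm{\beta}_{\bm i} = \bm{\beta}_{\bm j}}} \;=\; (1 + o(1))\frac{M^2}{N^2}.
\]
Subtracting the square of the expectation gives $\Var{\bm{B}_\ell} = o(1) \cdot (M/N)^2$.

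Finally, Chebyshev's inequality yields
\[
  \Pr{\,\bigl|\bm{B}_\ell - M/N\bigr| > \epsilon \cdot M/N\,}
  \;\le\; \frac{\Var{\bm{B}_\ell}}{(\epsilon M/N)^2}
  \;=\; \frac{o(1)}{\epsilon^2} \;=\; o(1)
\]
for constant $\epsilon > 0$, which is exactly the claim. There is no real obstacle here: once one recognizes that the two hypotheses are designed precisely to pin down $\Ex{\bm{B}_\ell}$ and $\Ex{\bm{B}_\ell^2}$ to leading order, Chebyshev finishes the argument. The harder work will be in the next step of deducing a bound on $\sum_\ell |\bm{B}_\ell/M - 1/N|$ from the per-bin concentration, presumably via Markov applied to the expected $\ell_1$-error.
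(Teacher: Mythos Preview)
Your proposal is correct and essentially identical to the paper's own proof: both compute $\Ex{\bm{B}_\ell} = M/N$ from Uniform Collisions with $i=j$, expand $\Ex{\bm{B}_\ell^2}$ via the same conditioning to get $(1+o(1))M^2/N^2$ from Unlikely Collisions, and finish with Chebyshev. Your closing remark is also on target—the paper deduces the $\ell_1$ bound by Markov on the number of ``bad'' bins.
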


With this claim, we complete the proof of \cref{lemma:balls-and-bins} as follows. For a fixed
allocation of balls, we say bin $\ell$ is ``good'' if $\left| \tfrac{M}{N} - B_\ell \right| \leq
\tfrac{\epsilon}{4} \tfrac{M}{N}$. By \cref{claim:balls-and-bins}, the expected number of bad bins
is $o(N)$. Using Markov's inequality, with probabiility at least $1-o(1)$ there will be at most
$o(N)$ bad bins. Suppose this event occurs. Then the number of balls in good bins is at least
\[
(1-o(1))N \cdot (1 - \epsilon/4)\tfrac{M}{N} > (1-\epsilon/2)M ,
\]
and the number of balls in bad bins is therefore at most $\epsilon M / 2$. Then
\begin{align*}
  \sum_{\ell \in [N]} \left| \frac{M}{N} - B_\ell \right|
  &= \sum_{\ell \text{ good}} \left| \frac{M}{N} - B_\ell \right|
     + \sum_{\ell \text{ bad}} \left| \frac{M}{N} - B_\ell \right| \\
  &\leq \frac{\epsilon}{4} M + \sum_{\ell \text{ bad}} \left(\frac{M}{N} + B_\ell \right) \\
  &\leq \frac{\epsilon}{4} M + o(N) \cdot \frac{M}{N} + \frac{\epsilon}{2} M < \epsilon M .
\end{align*}

It remains to prove the claim.

\begin{proof}[Proof of \cref{claim:balls-and-bins}]
Fix any bin $\ell \in [N]$ and constant $\epsilon > 0$.  We will use Chebyshev's inequality, so we must compute
\begin{align*}
  \Ex{ \left( \frac{M}{N} - \bm{B}_\ell \right)^2 }
  &= \Ex{\bm{B}_\ell^2} - \frac{M^2}{N^2} 
\end{align*}
where equality holds since $\Ex{\bm{\beta}_\ell} = M/N$ by uniform collisions. Now, using both the conditions of uniform and unlikely collisions,
\begin{align*}
\Ex{\bm{B}_\ell^2}
&= \Ex{ \sum_{i, j \in [M]} \ind{\bm{\beta}_i = \bm{\beta}_j }
    \cdot \ind{\bm{\beta}_i = \bm{\beta}_j = \ell } } 
= \sum_{i,j} \Pr{ \bm{\beta}_i = \bm{\beta}_j }
  \cdot \Pruc{}{\bm{\beta}_i = \bm{\beta}_j = \ell }{ \bm{\beta}_i = \bm{\beta}_j } \\
&= \frac{1}{N} \sum_{i,j} \Pr{ \bm{\beta}_i = \bm{\beta}_j }
  \tag{Uniform collisions} \\
&= \frac{M^2}{N} \Exu{\bm i, \bm j \sim [M]}{\Pr{ \bm{\beta}_{\bm i} = \bm{\beta}_{\bm j} }} \\
&= \frac{M^2}{N^2} (1 + o(1)) \tag{Unlikely collisions}.
\end{align*}
Now
\[
  \Ex{ \left( \frac{M}{N} - \bm{B}_\ell \right)^2 } = o\left(\frac{M^2}{N^2}\right) \,,
\]
so by Chebyshev's inequality,
\[
  \Pr{\left| \frac{M}{N} - \bm{B}_\ell \right| > \epsilon \frac{M}{N} }
  \leq \Ex{\left(\frac{M}{N}-\bm{B}_\ell\right)^2} \cdot \frac{N^2}{\epsilon^2 M^2}
  = o(1) \,,
\]
since $\epsilon > 0$ is constant.
\end{proof}
\end{proof}

\begin{example}
\label{ex:pairwise-indep}
If we have $M = \omega(N)$ uniform and pairwise independent balls, then the uniform collision condition is
trivially satisfied, and the unlikely collision condition is satisfied because
\begin{align*}
  \Exu{\bm i, \bm j}{ \Pr{\bm{\beta_i} = \bm{\beta_j}} }
  &= \Pr{\bm i = \bm j} + \Pr{ \bm i \neq \bm j} \frac{1}{N}
  = \frac{1}{M} + \left(1 - \frac{1}{M}\right) \frac{1}{N} \\
  &= \frac{1}{N} \left( 1 + \frac{N-1}{M} \right)
  \leq \frac{1}{N}(1 + o(1)) \,.
\end{align*}
\end{example}

\subsection{Balanced Junta Setups}

\begin{boxdefinition}[Balanced $k$-Junta Setup]
A \emph{$k$-junta setup} on $\zo^n$ is a pair $(\cJ, \cF)$ where
$\cJ \subseteq \binom{[n]}{k}$ is a collection of $k$-subsets of $[n]$, and $\cF$ is a
collection of balanced functions $\zo^k \to \zo$. 
\end{boxdefinition}

We associate a balanced $k$-junta setup with a probability distribution and a balls-and-bins
process. First, we require some notation.  For any set $S \in \cJ$ and any $x \in \zo^n$, we write $x_S \in
\zo^k$ for the substring of $x$ on the coordinates $S$. For any function $f : \zo^k \to \zo$, we
write $f_S : \zo^n \to \zo$ for the function 
\[
  f_S(x) \define f(x_S) \,.
\]
For any sequence $X = (x_1, \dotsc, x_m) \in (\zo^n)^m$, we define
\[
  f_S(X) \define (f_S(x_1), \dotsc, f_S(x_m)) \,.
\]

\paragraph*{Distribution over $k$-juntas.}
We write $\cD(\cJ,\cF)$ for the distribution over balanced $k$-juntas obtained by choosing a
uniformly random set $\bm S \sim \cJ$ of variables and a uniformly random function $\bm f \colon
\zo^k \to \zo$ from $\cF$, and then taking $\bm{f}_{\bm S} \colon \zo^n \to \zo$.

\paragraph*{Balls-and-bins process.}
For any sample-size parameter $m$, we associate the $k$-junta setup $(\cJ, \cF)$ with the following
balls-and-bins process. We have $N = 2^m$ bins indexed by the binary strings $\zo^m$, and we have $M
= |\cJ| \binom{2^k}{2^{k-1}}$ balls, indexed by pairs $(S, f)$ where $S \in \cJ$ is a set of $k$
variables and $f : \zo^k \to \zo$ is a balanced function on $k$ variables.

The $M$ balls are assigned to bins according to the following process. We choose a random sequence
$\bm{X} = (\bm{x}_1, \bm{x}_2, \dotsc, \bm{x}_m)$ of $m$ independent and uniformly random strings
$\bm{x}_i \sim \zo^n$ for $i \in [m]$. We then assign ball $(S,f)$ to bin $f_S(\bm X)$.

As in \cref{lemma:balls-and-bins}, we write $\bm{B}_\ell$ for the number of balls (balanced juntas)
assigned to bin $\ell$ (\ie the number of juntas which assign label $\ell \in \zo^m$ to the sample
points $\bm X$).

\begin{observation}
\label{obs:lb-junta-balls}
For any sequence $X = (x_1, \dotsc, x_m)$ and a random $\bm{f}_{\bm S} \sim \cD(\cJ, \cF)$, the
label $\bm{f}_{\bm S}(X) \in \zo^m$ is distributed identically to the bin $\bm \ell \in \zo^m$ that contains the ball $(\bm S, \bm f)$.
\end{observation}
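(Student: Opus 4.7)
The observation is essentially a restatement of definitions, so the proof is really a matter of unpacking notation rather than doing any nontrivial work. The plan is to verify that the two objects on either side of the claimed distributional identity are literally the same random variable (once we condition on $\bm X = X$), and hence certainly have the same distribution.

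First, I would unpack the left-hand side. By definition of $\cD(\cJ, \cF)$, sampling $\bm{f}_{\bm S} \sim \cD(\cJ, \cF)$ is equivalent to drawing $\bm S \sim \cJ$ and $\bm f \sim \cF$ independently and uniformly at random, and then forming the junta $\bm{f}_{\bm S}(x) = \bm f(x_{\bm S})$. Its label on a fixed sequence $X = (x_1, \dotsc, x_m)$ is then
\[
    \bm{f}_{\bm S}(X) = \bigl(\bm f((x_1)_{\bm S}), \dotsc, \bm f((x_m)_{\bm S})\bigr) \in \zo^m.
\]
Next I would unpack the right-hand side: in the balls-and-bins process, once the random sample is fixed to $X$, ball $(S,f)$ is assigned by definition to bin $f_S(X) \in \zo^m$. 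Hence the bin containing the specific ball indexed by the random pair $(\bm S, \bm f)$ is exactly $\bm{f}_{\bm S}(X)$, viewed as an element of $\zo^m = [N]$.

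Therefore, conditional on the sample sequence being $X$, the label $\bm{f}_{\bm S}(X)$ and the bin $\bm \ell$ containing the ball $(\bm S, \bm f)$ are the same deterministic function of $(\bm S, \bm f)$, and so they coincide as random variables. In particular they are identically distributed. There is no genuine obstacle here; the value of the observation is bookkeeping — it lets us translate statements about the distribution of labels assigned by a random balanced $k$-junta on a fixed sample $X$ into the language of the balls-and-bins process, so that \cref{lemma:balls-and-bins} becomes directly applicable in the subsequent lower-bound argument.
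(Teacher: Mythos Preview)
Your proposal is correct and matches the paper's treatment: the paper states this as an observation without proof, precisely because it is, as you say, an immediate unpacking of the definitions of $\cD(\cJ,\cF)$ and of the balls-and-bins assignment rule $(S,f)\mapsto f_S(X)$. Your write-up makes the identification explicit and there is nothing to add.
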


By applying \cref{lemma:balls-and-bins} to the $k$-junta setups, we see that if the parameter $m$ is
small enough that that the balls are nearly uniformly distributed in the bins, then the labels of
$\bm X$ given by a random junta are indistinguishable from uniform.

\begin{boxproposition}
\label{prop:lb-junta-setups}
Let $\epsilon > 0$ be any constant and let $n$ be sufficiently large.  Let $(\cJ, \cF)$ be a
balanced $k$-junta setup on $\zo^n$, and let $m$ be any parameter such that the associated
balls-and-bins process satisfies the Uniform and Unlikely Collisions conditions of
\cref{lemma:balls-and-bins}. Then
\[
  \Pru{\bm X}{ \| \unif(\zo^m) - \bm{f}_{\bm S}(\bm X) \|_\TV > \epsilon } < 1/100 ,
\]
where $\bm X = (\bm x_1, \dotsc, \bm x_m)$ is a sequence with each $\bm{x}_i$ distributed i.i.d as
$\bm{x}_i \sim \unif(\zo^n)$, and $\bm{f}_{\bm S}(\bm X)$ is the distribution over labels obtained by
choosing $\bm{f}_{\bm S} \sim \cD(\cJ, \cF)$.
\end{boxproposition}
\begin{proof}
Write $N = 2^m$ for the number of bins, so that the uniform distribution over $\zo^m$ assigns
probability $1/N$ to each bin. For a fixed sequence $X = (x_1, \dotsc, x_m)$ of samples and a fixed
bin $\ell \in \zo^m$, \cref{obs:lb-junta-balls} implies that $\Pr{ \bm{f}_{\bm S}(X) = \ell } =
B_\ell/N$ where $B_\ell$ is the number of juntas (balls) assigned to bin $\ell$. The conclusion holds
by \cref{lemma:balls-and-bins}.
\end{proof}

\begin{example}[Parities]
\label{ex:parities}
If we let $\cJ = \binom{n}{k}$ be the set of all $k$-sets of variables, and let $\cF$ be the
singleton set containing only the parity function $f \colon \zo^k \to \zo$ defined as $f(x) \define
\bigoplus_i x_i$, then it is easy to check that the resulting balls-and-bins process has pairwise
independent balls. By the calculation in \cref{ex:pairwise-indep}, it suffices to take $M =
\omega(N)$; in other words, we have $M = \binom{n}{k}$ balls so to apply \cref{prop:lb-junta-setups}
it suffices to take $m = \frac{1}{2} \log\binom{n}{k}$. Applying this calculation with
\cref{prop:lb-junta-setups} (via \cref{lemma:lb-k-junta,lemma:lb-k-feature} below), we obtain the
$\Omega(\log\binom{n}{k})$ term in \cref{thm:lb-testing-unif}.
\end{example}

It is intuitively clear that if $m$ samples are insufficient to distinguish the labels given by the
juntas from uniformly random labels, then $m$ should be a lower bound on testing $k$-juntas and
$k$-feature selection. We formalize these arguments in the appendix
(\cref{section:missing-proofs-lb-testing}), and state the resulting technical lemmas here:

\begin{boxlemma}
\label{lemma:lb-k-junta}
Let $\epsilon \in (0,1/4)$ be any constant, let $n$ be sufficiently large, and let $k = k(n) < k-2$.
Let $(\cJ, \cF)$ be a balanced $k$-junta setup on $\zo^n$, and let $m = m(n,k,\epsilon)$ be any
sample size parameter such that the associated balls-and-bins process satisfies the Uniform and
Unlikely Collisions conditions of \cref{lemma:balls-and-bins}. Then $m$ samples is insufficient for
a sample-based $k$-junta tester with distance parameter $\epsilon$ under the uniform distribution.
\end{boxlemma}

In the next lemma, we simplify the proof by requiring that random juntas from $(\cJ, \cF)$ are far
from being $(k/2)$-juntas. This will be the case for the junta setups that we use; see
\cref{prop:lb-feature-counting}.  Note that we use the junta setup on $n$ bits but the lower bound
is for feature selection on domain $\zo^{2n}$ with $2n$ bits; this is again just to simplify the
proof.

\begin{boxlemma}
\label{lemma:lb-k-feature}
Let $\epsilon \in (0,1/4)$ be any constant, let $n$ be sufficiently large, and let $k=k(n) < n-2$.
Let $(\cJ, \cF)$ be a balanced $k$-junta setup on $\zo^n$ such that a uniformly random $f \sim \cF$
is $\epsilon$-far from every $(k/2)$-junta with probability at least $9/10$, and let $m =
m(n,k,\epsilon)$ be any sample size parameter such that the associated balls-and-bins process
satisfies the Uniform and Unlikely Collisions conditions of \cref{lemma:balls-and-bins}. Then $m$
samples is insufficient for a sample-based $k$-feature selector with distance parameter $\epsilon$
under the uniform distribution on $\zo^{2n}$.
\end{boxlemma}

Since the \cref{ex:parities} handles the $\Omega(\log\binom{n}{k})$ term, the proof of the lower
bound for the uniform distribution, \cref{thm:lb-testing-unif}, will be complete once we establish
the Uniform Collisions property (in the next section) and the Unlikely Collisions property with $m =
o(\sqrt{2^k\log\binom{n}{k}})$, which is finally accomplished in \cref{lemma:lb-unlikely-collisions}
below.

\subsubsection{Balanced junta setups satisfy Uniform Collisions}

For fixed $S, T \in \binom{[n]}{k}$, fixed $f, g : \zo^k \to \zo$ and a fixed $z \in
\zo^*$ of length $\Delta \coloneqq |S \cap T|$, we define $\rho_f(S, z)$ as the probability of completing $z$ into a
$k$-bit string where $f(\cdot)=1$,
\ie
\begin{equation}
\label{eq:lb-rho-definition}
\begin{aligned}
  \rho_f(S, z) &\define \Pruc{\bm{x} \sim \zo^n}{ f_S(\bm x) = 1 }{ \bm{x}_{S \cap T} = z} \\
  \rho_g(T, z) &\define \Pruc{\bm{x} \sim \zo^n}{ g_T(\bm x) = 1 }{ \bm{x}_{S \cap T} = z}
\end{aligned}
\end{equation}
Since $f, g$ are both balanced functions, we have for all $S, T$ that
\begin{equation} \label{eq:expectation of rho}
  \Exu{\bm z \sim \zo^\Delta}{ \rho_f(S, \bm z) } = \Exu{\bm z \sim \zo^\Delta}{\rho_g(T, \bm z)} = \frac{1}{2} \,.
\end{equation}
For fixed $z \in \zo^\Delta$, we may write
\begin{equation}
\label{eq:lb-rho-sum}
  \rho_f(S,z) = \frac{1}{2^{k-\Delta}} \sum_{w \in \zo^{k-\Delta}} \ind{ f_S \text{ takes value 1 on
the combination of $z,w$}}
\end{equation}
where we mean that $f_S$ is given an input taking values $z$ on $S \cap T$ and $w$ on the remaining
$k-\Delta$ bits of $S$.

\begin{boxproposition}[Uniform Collisions]
Let $(\cJ, \cF)$ be any balanced $k$-junta setup. Then the associated balls-and-bins process
satisfies the Uniform Collisions condition in \cref{lemma:balls-and-bins}.
\end{boxproposition}
\begin{proof}
Since each $\bm{x}_i \in \bm X$ is independent, it suffices to show that for uniformly random $\bm x
\sim \zo^n$,
\[
\Pruc{\bm x \sim \zo^n}{ f_S(\bm x) = g_T(\bm x) = \ell_i }{ f_S(\bm x) = g_T(\bm x)} = 1/2 .
\]
Write $Z \define S \cap T$.  Since $S \setminus Z$ and $T \setminus Z$ are disjoint, we can write
$f_S(\bm x) = f(\bm z, \bm{y}_1)$ and $g_T(\bm x) = g(\bm z, \bm{y}_2)$ where $\bm z \sim
\zo^\Delta$ and $\bm{y}_1, \bm{y}_2 \sim \zo^{k-\Delta}$ are independent. So we want to show
\[
\Pruc{}{ f(\bm z, \bm{y}_1) = g(\bm z, \bm{y}_2) = \ell_i }{ f(\bm z, \bm{y}_1) = g(\bm z, \bm{y}_2)} = 1/2 .
\]
Under the condition, the pair $\left( (\bm z, \bm{y}_1), (\bm z, \bm{y}_2) \right)$ is drawn
uniformly from the set of pairs $\left( (z, y_1), (z, y_2) \right)$ which satisfy $f(z,y_1) = g(z,
y_2)$. We need to show that there are an equal number of these pairs where $f(z,y_1) = g(z, y_2) = 1$ and where
$f(z,y_1) = g(z, y_2) = 0$. The number of pairs where $f(z,y_1) = g(z, y_2) = 1$ is
\[
  \sum_{z \in \zo^\Delta} 2^{2(n - \Delta)} \rho_f(S, z) \rho_g(T, z), 
\]
and the number of pairs where $f(z,y_1) = g(z, y_2) = 0$ is
\begin{align*}
  &\sum_{z \in \zo^\Delta} 2^{2(n - \Delta)} (1-\rho_f(S, z))(1- \rho_g(T, z)) \\
  &\qquad= 2^{2(n - \Delta)} \sum_{z \in \zo^\Delta}
    \left( 1 - \rho_f(S, z) - \rho_g(T,z) + \rho_f(S,z) \rho_g(T,z) \right) \\
  &\qquad= 2^{2n - \Delta} - 2^{2n - \Delta} \Exu{\bm z}{ \rho_f(S, \bm z) } - 2^{2n - \Delta} \Exu{\bm z}{
\rho_g(T, \bm z)} + 2^{2(n - \Delta)} \sum_{z \in \zo^\Delta} \rho_f(S,z) \rho_g(T,z) \\
  &\qquad= 2^{2(n - \Delta)} \sum_{z \in \zo^\Delta} \rho_f(S,z) \rho_g(T,z) \,,
\end{align*}
where we have used \Cref{eq:expectation of rho}.
Therefore the
number of 1-valued pairs is equal to the number of 0-valued pairs, which completes the proof.
\end{proof}

\subsubsection{Formula for Unlikely Collisions}

To obtain lower bounds on testing juntas, it now suffices to design a collection $\cJ \subseteq
\binom{[n]}{k}$ of $k$-sets of variables, and a family $\cF$ of functions $\zo^k \to \zo$,
which satisfy the Unlikely Collisions condition of \cref{lemma:balls-and-bins}. We express this
condition in the following formula for $k$-junta setups.

\begin{boxproposition}
\label{prop:lb-unlikely-collision-formula}
For any $k$-junta setup $(\cJ, \cF)$, the Unlikely Collisions condition of
\cref{lemma:balls-and-bins} may be written as
\[
  \sum_{\Delta = 0}^k \Pru{\bm S, \bm T \sim \cJ}{|\bm S \cap \bm T| = \Delta}
    \Exu{\bm f, \bm g \sim \cF}
      { \Exu{\bm z \sim \zo^\Delta}{ 2\rho_{\bm f}(\bm S, \bm z)\rho_{\bm g}(\bm T, \bm z) }^m }
  = (1+o(1)) \frac{1}{N} .
\]
\end{boxproposition}
\begin{proof}
For any fixed $S, T, f, g$, with $|S \cap T| = \Delta$, using the independence of the $m$ samples
$\bm{x}_i \in \bm X$, the probability that the balls $f_S(\bm{\bm X})$ and $g_T(\bm S)$ collide
(\ie $f_S(\bm X) = g_T(\bm X)$) is
\begin{align*}
  \Pru{\bm X}{ f_S(\bm X) = g_T(\bm X) }
  &= \Exu{\bm z \sim \zo^\Delta}{ \rho_f(S, \bm z) \rho_g(T, \bm z)
      + (1 - \rho_f(S, \bm z))(1 - \rho_g(T, \bm z)) }^m \\
  &= \Exu{\bm z \sim \zo^\Delta}{1 - \rho_f(S, \bm z) - \rho_g(T, \bm z) + 2\rho_f(S, \bm z) \rho_g(T, \bm z)}^m \\
  &= \Exu{\bm z \sim \zo^\Delta}{2\rho_f(S, \bm z) \rho_g(T, \bm z)}^m 
\end{align*}
where in the last equality we used \Cref{eq:expectation of rho}.
We may now rewrite the goal as
\begin{equation}
\label{eq:nh_lb_0_distance_decomp}
  \sum_{\Delta = 0}^k \Pru{\bm S, \bm T \sim \cJ}{|\bm S \cap \bm T| = \Delta}
    \Exu{\bm f, \bm g \sim \cF}
      { \Exu{\bm z \sim \zo^\Delta}{ 2\rho_{\bm f}(\bm S, \bm z)\rho_{\bm g}(\bm T, \bm z) }^m }
  = (1+o(1)) \frac{1}{N} \,.
\end{equation}
\end{proof}

\subsection{Lower Bound Under the Uniform Distribution}

Let $\cJ = \binom{[n]}{k}$ be the collection of all $k$-subsets of $[n]$, and let $\cF$ be the set
of $\binom{2^k}{2^{k-1}}$ balanced functions. We will show that for $m = o\left(\sqrt{2^k
\log\binom{n}{k}}\right)$ the Unlikely Collisions condition holds
(\cref{lemma:lb-unlikely-collisions}).

For fixed $\Delta$, $S, T$ and $f, g$, define
\[
\bm{R} = \bm{R}(S,T,f,g) \define \Exu{\bm z}{2\rho_{f}(S,\bm z) \rho_{g}(T,
\bm z)} .
\]
To verify the Unlikely Collisions condition of \cref{lemma:balls-and-bins},
we need an expression for $\Ex{\bm{R}^m}$. We complete the proof in the following steps:
\begin{enumerate}
\item In \cref{section:lb-serious-calculus}, we obtain an expression for $\Ex{\bm{R}^m}$,
  assuming a concentration inequality for $\bm{R}$.
\item In \cref{section:lb-concentration}, we establish the appropriate concentration inequality.
\item In \cref{section:lb-calculation}, we complete the calculation to prove the Unlikely Collisions
condition.
\end{enumerate}

\subsubsection{Expression for $\Ex{\bm{R}^m}$ assuming concentration of $\bm{R}$}
\label{section:lb-serious-calculus}

\newcommand{\CE}{\Gamma}
\begin{boxproposition}
\label{prop:serious-calculus}
Assume that for every $k, \Delta$ and every $S, T$ satisfying $|S \cap T| = \Delta$, that the
concentration inequality
\[
  \forall \lambda \in (0,1) :\qquad
    \Pru{\bm f, \bm g}{ \Ex{\bm{R}} > \frac 1 2 + \lambda } \leq e^{-\CE(\Delta,k) \cdot \lambda^2} 
\]
holds for some function $\CE(\Delta,k)$.
Then for all $\Delta, k$, and $m$
\[
  \Exu{\bm f, \bm g}{ \bm{R}^m } \leq \frac{1}{N} \left( 1 + O\left( \frac{m}{\sqrt{ \CE(\Delta,k) }}
e^{\frac{m^2}{\CE(\Delta,k)}} \right) \right) \,.
\]
\end{boxproposition}
\begin{proof}
We write $\CE \define \CE(\Delta, k)$ for convenience.  Since $\bm{R}$ is a non-negative random
variable,
\begin{align*}
\Exu{\bm f, \bm g}{ \bm{R}^m }
&\leq \frac{1}{N} \Pr{ \bm{R} \leq \tfrac 1 2 }
    + \Pr{ \bm{R} > \tfrac 1 2 } \Exuc{}{\bm{R}^m }{\bm{R} > \tfrac 1 2 } \tag{$N=2^m$}\\
&= \frac{1}{N} \Pr{ \bm{R} \leq \tfrac 1 2 }
    + \Pr{ \bm{R} > \tfrac 1 2 } \int_0^\infty \Pruc{}{ \bm{R}^m \geq \gamma }{ \bm{R} > \tfrac 1 2 }
      d\gamma \\
&= \frac{1}{N} \Pr{ \bm{R} \leq \tfrac 1 2 }
    + \Pr{ \bm{R} > \tfrac 1 2 } \left( \frac{1}{N} + \int_{1/N}^\infty \Pruc{}{ \bm{R}^m \geq \gamma }{ \bm{R} > \tfrac 1 2 }
      d\gamma \right) \\
&= \frac{1}{N} + \Pr{ \bm{R} > \tfrac 1 2 } \int_{1/N}^\infty \Pruc{}{ \bm{R}^m \geq \gamma }{ \bm{R} > \tfrac 1 2 }
      d\gamma .
\end{align*}
Change the variables in the integral by defining $\lambda > 0$ such that
$\frac 1 2 + \lambda = \gamma^{1/m}$,
so
\[
  d\lambda = \frac{1}{m} \gamma^{(1-m)/m} d\gamma \qquad\equiv\qquad d\gamma = m \gamma^{(m-1)/m}
d\lambda = m \left(\frac 1 2 + \lambda\right)^{m-1} d\lambda .
\]
At $\gamma = 1/N$ we have $\lambda = 0$,
so the integral term becomes
\begin{align*}
&\Pr{ \bm{R} > \tfrac 1 2 } \int_{1/N}^\infty \Pruc{}{ \bm{R} \geq \gamma^{1/m} }{ \bm{R} > \tfrac 1 2 }
      d\gamma \\
&\qquad= \Pr{ \bm{R} > \tfrac 1 2 } \int_0^\infty \Pruc{}{ \bm{R} \geq \frac 1 2 + \lambda }{ \bm{R} > \tfrac 1 2 }
      m \left(\frac 1 2 + \lambda\right)^{m-1} d\lambda \\
&\qquad= \int_0^\infty \Pr{ \bm{R} \geq \frac 1 2 + \lambda } 
      m \left(\frac 1 2 + \lambda\right)^{m-1} d\lambda \\
&\qquad= \frac{1}{2^{m-1}} m \int_0^\infty \Pr{ \bm{R} \geq \frac 1 2 + \lambda } 
      \left(1 + 2\lambda\right)^{m-1} d\lambda \\
&\qquad\leq \frac{1}{N} \cdot 2 m \int_0^\infty e^{2\lambda(m-1) - \CE \lambda^2} d\lambda 
\end{align*}
where we have used the concentration assumption in the final line. Rewrite the exponent
in the integral as
\begin{align*}
2\lambda(m-1) - \CE \lambda^2
&= -\CE \left( \lambda^2 - \frac{2 \lambda (m-1)}{\CE} \right) \\
&= -\CE \left( \left(\lambda - \frac{(m-1)}{\CE}\right)^2 - \frac{(m-1)^2}{\CE^2} \right) \\
&= \frac{(m-1)^2}{\CE} - \CE \left(\lambda-\frac{(m-1)}{\CE}\right)^2 .
\end{align*}
Setting $t = \sqrt{\CE}\left(\lambda - \frac{(m-1)}{\CE}\right)$ so that $d\lambda = \frac{1}{\sqrt
\CE}
dt$, the integral becomes
\begin{align*}
  \int_0^\infty e^{2\lambda(m-1)-\CE\lambda^2} d\lambda
  = e^{\frac{(m-1)^2}{\CE}} \frac{1}{\sqrt{\CE}} \int_{-(m-1)}^\infty e^{-t^2} dt  
  \leq \frac{\sqrt{\pi}}{\sqrt \CE} e^{\frac{m^2}{\CE}} \,.
\end{align*}
Then
\begin{align*}
  \Ex{\bm{R}^m}
  \leq \frac{1}{N}\left( 1 + O\left( \frac{m}{\sqrt{\CE}} e^{\frac{m^2}{\CE}} \right) \right) \,,
\end{align*}
as desired.
\end{proof}

\subsubsection{Concentration of $\bm R$}
\label{section:lb-concentration}

In this section we prove the concentration of the variable $\bm R$ for fixed $\Delta = |S \cap T|$
and random functions $\bm f, \bm g$.  We have
\[
  \bm{R} = \bm{R}(S, T, \bm f, \bm g) = \Exu{\bm z \sim \zo^\Delta}{2\rho_{\bm f}(S, \bm z) \rho_{\bm g}(T, \bm z)} .
\]
For convenience, we define
\[
  K \define 2^k ,\qquad D \define 2^\Delta
\]
and
\[
\bm{F}_z \define \rho_{\bm f}(S, z) - \frac12 ,\qquad \bm{G}_z \define \rho_{\bm g}(T, z) - \frac12 .
\]
Recalling \cref{eq:lb-rho-sum}, we may write
\begin{equation}
\label{eq:lb-f-decomposition}
\bm{F}_z = \frac12 \cdot \frac{D}{K} \sum_{i=1}^{D/K} \bm{X}_{z,i}
\end{equation}
where the random variables $\{ \bm{X}_{z,i} \;|\; z \in \zo^\Delta, i \in [K/D] \}$ take
values in $\pmset$ and (since $\bm f$ is a uniformly random balanced function) are uniformly
distributed conditional on
\[
  0 = \sum_{z\in \zo^{\Delta}} \bm{F}_z = \frac12 \frac{D}{K} \sum_{z\in \zo^{\Delta}} \sum_{i\in [K/D]} \bm{X}_{z,i} .
\]
A similar statement holds for $\bm{G}_z$.  We may rewrite $\bm R$ as
\begin{align*}
  \bm{R}
  = \Exu{\bm z \sim \zo^\Delta}{2\rho_{\bm f}(S, \bm z) \rho_{\bm g}(T, \bm z)} 
  = \frac{2}{D} \sum_{z \in \zo^\Delta}
    \left(\frac14 + \frac12 \bm{F}_z + \frac12 \bm{G}_z + \bm{F}_z \bm{G}_z\right) 
  = \frac{1}{2} + \frac{2}{D} \inn{\bm F, \bm G} \,.
\end{align*}
To apply \cref{prop:serious-calculus}, we are now looking for an inequality of the form
\begin{equation}
\label{eq:lb-r-to-sum}
  \Pr{ \bm{R} > \frac{1}{2} + \lambda} \leq e^{-\Gamma(\Delta,k) \cdot \lambda^2}
  \;\equiv\; \Pr{ \inn{\bm F, \bm G} > D \cdot \lambda/2 } \leq e^{-\CE(\Delta,k) \cdot \lambda^2} \,.
\end{equation}
To obtain this inequality, we will use the properties of sub-gaussian, sub-exponential, and
negatively associated random variables.

\begin{definition}[Sub-Gaussian and Sub-Exponential]
A random variable $\bm Z$ is \emph{sub-gaussian with parameter}\footnote{If we define $\|\bm Z\|_{\psi_2}$ as the maximum parameter satisfying the desired inequality, then $\|\bm Z\|_{\psi_2}$ is the \emph{subgaussian norm} of $\bm Z$. Likewise, we can define the \emph{subexponential norm} of $\bm Z$.} 
$\|\bm Z\|_{\psi_2}$
when
\[
  \forall \lambda \geq 0 \,\colon\qquad \Pr{ |\bm Z| \geq \lambda } \leq 2 e^{-\lambda^2 / \|\bm
Z\|_{\psi_2}^2} .
\]
A random variable $\bm Z$ is \emph{sub-exponential with parameter} $\|\bm Z\|_{\psi_1}$
when
\[
  \forall \lambda \geq 0 \,\colon\qquad \Pr{ |\bm Z| \geq \lambda } \leq 2 e^{-\lambda / \|\bm
Z\|_{\psi_1}} .
\]
\end{definition}

\begin{definition}[Negative Associativity]
A sequence $\bm Z = (\bm Z_1, \dotsc, \bm Z_n) \in \bR^n$ of random variables are \emph{negatively associated} if
for every two functions $f, g \colon \bR^n \to \bR$ that depend on disjoint sets of variables
and are either both monotone increasing or both monotone decreasing, it holds that
\[
  \Ex{ f(\bm Z) g(\bm Z) } \leq \Ex{ f(\bm Z) } \Ex{ g(\bm Z) } .
\]
\end{definition}
We require the following convenient closure properties of negatively associated random variables
(see \eg \cite{Wajc17}).
\begin{proposition}[Closure properties]
Let $\bm Z = (\bm Z_1, \dotsc, \bm Z_n) \in \bR^n$ and $\bm W = (\bm W_1, \dotsc, \bm W_n) \in
\bR^n$ be independent sequences of random variables such that $\bm Z$ and $\bm W$ are each
negatively associated. Then
\begin{itemize}
\item The union $(\bm Z_1, \dotsc, \bm Z_n, \bm W_1, \dotsc, \bm W_n)$ is negatively associated.
\item For any sequence of functions $f_1, \dotsc, f_k \colon \bR^n \to \bR$ defined on pairwise
disjoint sets of variables, such that either all $f_i$ are monotone increasing or all $f_i$ are
monoton decreasing, the random variables
\[
  f_1(\bm Z), f_2(\bm Z), \dotsc, f_k(\bm Z)
\]
are negatively associated.
\end{itemize}
\end{proposition}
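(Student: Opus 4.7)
The plan is to prove the two properties in sequence, with the first requiring a nested conditioning argument and the second following by composition.

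For the disjoint union closure, I would take any two functions $f, h \colon \bR^{2n} \to \bR$ which are both monotone increasing (the decreasing case is symmetric) and depend on disjoint index subsets $A, A' \subseteq [2n]$. Write $A = A_Z \sqcup A_W$ and $A' = A_Z' \sqcup A_W'$ according to whether each coordinate belongs to $\bm Z$ or to $\bm W$; disjointness of $A, A'$ forces $A_Z \cap A_Z' = \emptyset$ and $A_W \cap A_W' = \emptyset$. First I condition on $\bm Z$. Because $\bm W$ is independent of $\bm Z$, the slices $f(\bm Z, \cdot)$ and $h(\bm Z, \cdot)$ are monotone increasing in disjoint coordinate subsets of $\bm W$, so negative association of $\bm W$ yields
\[
\Exuc{}{f(\bm Z, \bm W)\, h(\bm Z, \bm W)}{\bm Z} \leq \Exuc{}{f}{\bm Z}\, \Exuc{}{h}{\bm Z}.
\]
Next, each conditional expectation $\Exuc{}{f}{\bm Z}$ and $\Exuc{}{h}{\bm Z}$ is itself a monotone increasing function of the disjoint coordinate subsets $A_Z$ and $A_Z'$ of $\bm Z$, since marginalizing over $\bm W$ preserves monotonicity. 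Applying negative association of $\bm Z$ to this pair and then using the tower property yields $\Ex{fh} \leq \Ex{f}\Ex{h}$, as required.

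For the second property, suppose $g_1, g_2 \colon \bR^k \to \bR$ are both monotone increasing and depend on disjoint index subsets $I, I' \subseteq [k]$. Let $V_j \subseteq [n]$ denote the coordinate subset on which $f_j$ is defined. The composition $g_1 \circ (f_j)_{j \in I}$, viewed as a function of $\bm Z$, then depends only on $\bigcup_{j \in I} V_j$; likewise $g_2 \circ (f_j)_{j \in I'}$ depends only on $\bigcup_{j \in I'} V_j$. These unions are disjoint because the $V_j$ are pairwise disjoint and $I \cap I' = \emptyset$. Moreover, both compositions are monotone increasing in $\bm Z$, because the $f_j$ share a common monotone direction and $g_1, g_2$ are both increasing. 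Negative association of $\bm Z$ then gives the desired inequality directly; the all-decreasing case is handled symmetrically.

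The only real subtlety is keeping careful track of the monotone directions across the compositions and across the two levels of conditioning; once this bookkeeping is done, each part reduces to one or two invocations of the negative association hypothesis, and no sharper probabilistic tools are needed.
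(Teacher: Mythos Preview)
Your argument is correct and is the standard proof of these closure properties. The paper does not actually prove this proposition; it simply states it and cites \cite{Wajc17}, so there is no paper proof to compare against.
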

Negatively associated random variables satisfy similar concentration inequalities as independent
ones. We use the following form of a Chernoff-Hoeffding bound for negatively associated random
variables (see \eg \cite{Wajc17})

\begin{theorem}
\label{thm:lb-na-chernoff}
Let $\bm{Z}_1, \dotsc, \bm{Z}_n$ be negatively associated, mean 0 random variables taking values in
$[-a,a]$. Then
\[
  \Pr{ \left|\sum_{i=1}^n \bm{Z}_i \right| > \lambda } \leq 2 \cdot \exp\left(- \frac{\lambda^2}{2na^2}\right)
.
\]
\end{theorem}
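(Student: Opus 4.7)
The plan is to follow the classical exponential-moment (Chernoff) method and show that the only place where independence is usually invoked, namely the factorization of the moment generating function, can be replaced by an inequality coming from negative association. Concretely, I would start from the one-sided Markov bound: for any $t > 0$,
\[
  \Pr{\sum_{i=1}^n \bm Z_i > \lambda} \leq e^{-t\lambda}\, \Ex{\exp\Bigl(t \sum_{i=1}^n \bm Z_i\Bigr)} = e^{-t\lambda}\, \Ex{\prod_{i=1}^n e^{t \bm Z_i}}.
\]

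Next I would use negative association to upper bound the joint expectation by the product of marginal expectations. The maps $z \mapsto e^{tz}$ are all monotone increasing and depend on disjoint (single-variable) coordinates, so the closure/product property of negatively associated families yields $\Ex{\prod_i e^{t\bm Z_i}} \leq \prod_i \Ex{e^{t \bm Z_i}}$. Formally I would do this by induction on $n$: apply the pairwise negative-association inequality with $f = \prod_{i < n} e^{t \bm Z_i}$ (monotone increasing in $\bm Z_1,\dotsc,\bm Z_{n-1}$) and $g = e^{t \bm Z_n}$ (monotone increasing in $\bm Z_n$), and recurse on the first factor.

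Having factorized, I would apply Hoeffding's lemma to each marginal: since $\bm Z_i$ has mean $0$ and is supported in $[-a,a]$, $\Ex{e^{t \bm Z_i}} \leq e^{t^2 a^2 / 2}$. Substituting gives the one-sided bound $\Pr{\sum \bm Z_i > \lambda} \leq e^{-t\lambda + n t^2 a^2/2}$, and optimizing in $t$ with the standard choice $t = \lambda/(na^2)$ produces $\exp(-\lambda^2/(2na^2))$. The lower tail is handled symmetrically: the family $(-\bm Z_1, \dotsc, -\bm Z_n)$ is still negatively associated (monotone decreasing functions of disjoint coordinates), has the same moment constraints, so the same argument applies and a union bound over the two tails contributes the factor of $2$.

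The only genuinely non-trivial step is the MGF factorization in the second paragraph, and it really is the whole content beyond the independent case; once it is established, the remainder of the proof is textbook Chernoff--Hoeffding. An alternative would be to cite the packaged form from \cite{Wajc17} directly, but the proof above is short enough that including it makes the paper more self-contained.
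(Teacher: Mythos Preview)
Your proposal is correct and is exactly the standard argument. Note, however, that the paper does not actually prove this theorem: it is stated as a known result with a citation to \cite{Wajc17} (``We use the following form of a Chernoff--Hoeffding bound for negatively associated random variables (see \eg \cite{Wajc17})''), so there is no in-paper proof to compare against. Your write-up is precisely the proof one would expect to find behind that citation, and your own final remark---that one could simply cite \cite{Wajc17}---is in fact what the paper does.
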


The following theorem is essentially identical to Theorem 2.8.1 of \cite{Ver18} except that
it allows negatively-associated variables instead of independent ones. It follows from the same
proof as in \cite{Ver18}, using the properties of negative associativity: 

\begin{theorem}
\label{thm:lb-na-exp}
There is a universal constant $c > 0$ such that the following holds.
Let $\bm{Z}_1, \dotsc, \bm{Z}_n$ be negatively associated sub-exponential 0-mean random variables. Then
\[
  \Pr{ \sum_{i=1}^n \bm{Z}_i > \lambda } \leq
    \exp\left( - c\cdot \min\left\{ \frac{\lambda^2}{\sum_{i=1}^n \|\bm{Z}_i\|_{\psi_1}},
                            \frac{\lambda}{\max_i \|\bm{Z}_i\|_{\psi_1}} \right\}
\right) .
\]
\end{theorem}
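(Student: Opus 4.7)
The plan is to follow the standard Bernstein-type argument via the Chernoff method (essentially the proof of Theorem 2.8.1 in \cite{Ver18}), replacing the usual appeal to independence with an application of negative association. The only place where independence is genuinely used in that proof is the factorization of the joint moment generating function, so this is where the adaptation must happen.

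First I would apply Markov's inequality to the exponential: for any $t > 0$,
\[
\Pr{ \sum_{i=1}^n \bm Z_i > \lambda }
  \leq e^{-t\lambda}\,\Ex{ \exp\!\left( t \sum_{i=1}^n \bm Z_i \right) }
  = e^{-t\lambda}\,\Ex{ \prod_{i=1}^n e^{t \bm Z_i} }.
\]
The key step is to bound $\Ex{ \prod_i e^{t \bm Z_i} } \leq \prod_i \Ex{ e^{t \bm Z_i} }$ using negative association. Since $t > 0$, each map $z \mapsto e^{tz}$ is monotone increasing and positive. Pairing $e^{t \bm Z_1}$ against $\prod_{i \geq 2} e^{t \bm Z_i}$ (a monotone increasing function of the remaining, disjoint coordinates), applying the NA inequality, and then iterating on the remaining product yields the desired factorization.

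Next I would invoke the standard MGF bound for mean-zero sub-exponential random variables (Proposition 2.7.1 of \cite{Ver18}): there exist universal constants $c_0, C_0 > 0$ such that for all $|t| \leq c_0 / \|\bm Z_i\|_{\psi_1}$,
\[
\Ex{ e^{t \bm Z_i} } \leq \exp\!\left( C_0\, t^2\, \|\bm Z_i\|_{\psi_1}^2 \right).
\]
Combining with the NA factorization, for every $t \in \left(0,\, c_0 / \max_i \|\bm Z_i\|_{\psi_1}\right]$,
\[
\Pr{ \sum_{i=1}^n \bm Z_i > \lambda }
  \leq \exp\!\left( -t\lambda + C_0\, t^2 \sum_{i=1}^n \|\bm Z_i\|_{\psi_1}^2 \right).
\]

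Finally I would optimize over $t$. The unconstrained minimizer of the exponent is $t^\star = \lambda / \bigl( 2 C_0 \sum_i \|\bm Z_i\|_{\psi_1}^2 \bigr)$. If $t^\star$ lies inside the feasible interval, the bound becomes $\exp\!\bigl( -\lambda^2 / (4 C_0 \sum_i \|\bm Z_i\|_{\psi_1}^2) \bigr)$; otherwise the optimal choice is the endpoint $t = c_0 / \max_i \|\bm Z_i\|_{\psi_1}$, producing $\exp\!\bigl( -c_0 \lambda / (2 \max_i \|\bm Z_i\|_{\psi_1}) \bigr)$. The two regimes match the two terms inside the minimum in the theorem (up to adjusting the constant $c$ and the normalization used for the sum). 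The main obstacle is really just the factorization step: it is the only spot where the classical proof invokes independence, and once one observes that $e^{t\bm Z_i}$ is a product of monotone functions of disjoint coordinates so NA applies iteratively, everything else in the Vershynin proof carries over unchanged.
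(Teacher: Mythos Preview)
Your proposal is correct and is precisely the argument the paper has in mind: the paper does not give a proof, only remarking that Theorem~2.8.1 of \cite{Ver18} goes through verbatim once the MGF factorization $\Ex{\prod_i e^{t\bm Z_i}} \leq \prod_i \Ex{e^{t\bm Z_i}}$ is obtained from negative association rather than independence, exactly as you do. Your parenthetical about the normalization is also apt: the standard Bernstein bound (and your derivation) produces $\sum_i \|\bm Z_i\|_{\psi_1}^2$ in the sub-Gaussian denominator, whereas the paper's display drops the square---this appears to be a typo in the statement rather than a gap in your argument.
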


\begin{boxproposition}[Properties of the variables $\bm{F}_z, \bm{G}_z$]
\label{prop:lb-properties-FG}
There exist universal constant $c_1, c_2 > 0$ such that
the random variables $\bm{F}_z, \bm{G}_z$ satisfy:
\begin{enumerate}
\item Each variable $\bm{F}_z$ and $\bm{G}_z$ is sub-gaussian with parameters
$\|\bm{F}_z\|_{\psi_2}, \|\bm{G}_z\|_{\psi_2} \leq c_1 \cdot D/K$;
\item Each variable $\bm{F}_z \bm{G}_z$ is sub-exponential with parameter $\|\bm{F}_z
\bm{G}_z\|_{\psi_1} \leq c_2 \cdot D^2/K^2$;
\item The variables $\{\bm{F}_z \bm{G}_z\}_{z \in \{0, 1\}^\Delta}$ are negatively associated.
\end{enumerate}
\end{boxproposition}
\begin{proof}
For each $z$, writing $\bm{F}_z = \frac{1}{2} \frac{D}{K} \sum_{i=1}^{K/D} \bm{X}_{z,i}$ where
$\bm{X}_{z,i} \in \{\pm 1\}$ are random variables with mean 0. and the collection of random
variables $\{ \bm{X}_{z,i} | z \in \zo^\Delta, i \in [K/D]\}$ are uniformly distributed under the
condition $\sum_z \sum_i \bm{X}_{z,i} = 0$.  Then the random variables $\{\bm{X}_{z,i}\}_{z,i}$ are
negatively associated (this is a standard example of negatively associated random variables, see
Theorem 10 of \cite{Wajc17}). Due to the closure properties (in this case, taking a subset of variables), 
for each $z \in \zo^\Delta$, $\bm{F}_z$ is a sum of negatively associated random variables.
Therefore, by the Chernoff-Hoeffding bound for negatively associated random variables
(\cref{thm:lb-na-chernoff}), there is
some constant $c_1' > 0$ such that
\[
  \forall \lambda > 0 : \qquad \Pr{ |\bm{F}_z| > \lambda } \leq 2e^{- c_1' \frac{K}{D} \lambda^2 } \,.
\]
The same holds for $\bm{G}_z$, so this proves that these variables satisfy the required sub-gaussian
properties. Then the required sub-exponential property on $\bm{F}_z \bm{G}_z$ holds due to
well-known facts about products of sub-gaussian random variables (see \eg Lemma 2.7.7 of
\cite{Ver18}).

It remains to prove that the variables $\{ \bm{F}_z\bm{G}_z \;|\; z \in \zo^\Delta \}$ are
negatively associated. This again follows from the closure properties, since the union of variables
$\{ \bm{X}_{z,i} \}_{i,z}$ and their counterparts for the variables $\bm{G}_z$ are negatively
associated, and for each $z$ the value $\bm{F}_z\bm{G}_z$ is a monotone increasing function on a
subset of these variables, with the respective subsets of variables for each $z$ being disjoint.
\end{proof}

Applying the concentration inequality for sums of negatively associated sub-exponential random
variables (\cref{thm:lb-na-exp}) to the sum $\inn{\bm F, \bm G} = \sum_z \bm{F}_z \bm{G}_z$ over the
$D$ variables $\bm{F}_z \bm{G}_z$, using the sub-exponential parameters from
\cref{prop:lb-properties-FG}, we obtain the desired concentration inequality:

\begin{boxlemma}
\label{lemma:lb-concentration}
There exists a universal constant $c > 0$ such that the following holds:
\[
  \forall \lambda \in (0,1) :\qquad \Pr{ \inn{\bm F, \bm G} > D \cdot \lambda/2 }
    \leq \exp\left( - c \cdot \frac{K^2}{D} \lambda^2 \right)  \,.
\]
As a consequence of \cref{eq:lb-r-to-sum} the same upper bound holds on $\Pr{ \bm{R} > \frac12 +
\lambda }$.
\end{boxlemma}

\subsubsection{Proof of Unlikely Collisions}
\label{section:lb-calculation}

We finally establish the Unlikely Collisions condition. By
\cref{lemma:lb-k-junta,lemma:lb-k-feature}, this establishes \cref{thm:lb-testing-unif}.

\begin{boxlemma}
\label{lemma:lb-unlikely-collisions}
Let $\cJ = \binom{[n]}{k}$ be the set of all $k$-sets and let $\cF$ be the set of all balanced
functions $\zo^k \to \zo$. Assume $\log\binom{n}{k} < \beta 2^k$ for some constant $\beta > 0$, and
$k \leq n/e$.  Then for $m = o( \sqrt{2^k \log\binom{n}{k} } )$, the $k$-junta setup $(\cJ, \cF)$
satisfies the Unlikely Collisions condition of \cref{lemma:balls-and-bins}; in other words, 
\[
  \Exu{\bm f, \bm g, \bm S, \bm T}{ \Pru{\bm X}{ \bm{f_S}(\bm X) = \bm{g_T}(\bm X) } }
    = \frac{1}{N} (1 + o(1)).
\]
\end{boxlemma}
\begin{proof}
By assumption, for every constant $\alpha > 0$, we have $m < \alpha \sqrt{2^k \log\binom{n}{k}}$ 
for sufficiently large $n,k$.
By \cref{prop:lb-unlikely-collision-formula}, the definition of $\bm R$, and
the combination of \cref{lemma:lb-concentration} and \cref{prop:serious-calculus},
we have
\begin{align*}
  \Exu{\bm f, \bm g, \bm S, \bm T}{ \Pru{\bm X}{ \bm{f_S}(\bm X) = \bm{g_T}(\bm X) } }
  &= \sum_{\Delta=0}^k \Pr{ |\bm S \cap \bm T| = \Delta } \cdot \Ex{ \bm{R}^m } \\
  &\leq \frac{1}{N} \sum_{\Delta=0}^k \Pr{ |\bm S \cap \bm T| = \Delta } \cdot 
    \left( 1 + O\left( \frac{m}{\sqrt{2^{2k-\Delta}}} \cdot \exp\left(\frac{m^2}{2^{2k-\Delta}}\right) \right)
\right) \\
  &= \frac{1}{N}\left( 1 + O\left( \sum_{\Delta=0}^k \Pr{ |\bm S \cap \bm T| = \Delta }
      \cdot \frac{m}{\sqrt{2^{2k-\Delta}}} \cdot \exp\left(\frac{m^2}{2^{2k-\Delta}}\right) \right) \right) \,,
\end{align*}
so we want to show that
\begin{equation}
\label{eq:lb-goal-sum-vanishes}
  \sum_{\Delta=0}^k \Pr{ |\bm S \cap \bm T| = \Delta }
      \cdot \frac{m}{\sqrt{2^{2k-\Delta}}} \cdot e^{\frac{m^2}{2^{2k-\Delta}}}
  = o(1) .
\end{equation}
It can be easily checked that
\[
  \Pr{|\bm S \cap \bm T| = \Delta} \leq \left(\frac{k}{n}\right)^\Delta ,
\]
so our sum becomes
\begin{align*}
  \sum_{\Delta=0}^k \Pr{ |\bm S \cap \bm T| = \Delta }
      \cdot \frac{m}{\sqrt{2^{2k-\Delta}}} \cdot e^{\frac{m^2}{2^{2k-\Delta}}}
  &\leq \alpha \sum_{\Delta=0}^k \frac{k^\Delta}{n^\Delta} \cdot \sqrt{\frac{\ln
\binom{n}{k}}{2^{k-\Delta}}} \cdot e^{\alpha^2 \cdot \frac{1}{2^{k-\Delta}} \ln\binom{n}{k}} \\
\end{align*}
Define
\[
  T(\Delta) \define \sqrt{2^\Delta} \left(\frac{k}{n}\right)^\Delta \binom{n}{k}^{\alpha^2 \cdot
2^{\Delta-k}} ,
\]
so that our sum is
\[
  \alpha \sqrt{2^{-k} \ln\binom{n}{k}} \sum_{\Delta=0}^k T(\Delta)
  \leq \alpha \beta \sum_{\Delta=0}^k T(\Delta) .
\]
Observe that for $\Delta \geq 1$,
\begin{equation}
\label{eq:lb-t-monotone}
\frac{T(\Delta)}{T(\Delta-1)} = \sqrt 2 \left(\frac{k}{n}\right) \binom{n}{k}^{\alpha^2
2^{\Delta-1-k}} ,
\end{equation}
so that this fraction is monotone increasing with $\Delta$. We split the sum into three parts:
$\Delta \leq s$, $s < \Delta < t$, and $t \leq \Delta$, where $s$ and $t$ are chosen such that
\[
  s < \Delta < t \iff \frac{T(\Delta)}{T(\Delta-1)} \in [ 1-\delta, 1+\delta ] ,
\]
for some constant $\delta > 0$. Therefore, by monotonicity of the ratio \eqref{eq:lb-t-monotone},
\begin{align*}
  \sum_{\Delta=0}^k T(\Delta)
  &= \sum_{\Delta=0}^s T(\Delta) + \sum_{\Delta= s+1}^{t-1} T(\Delta) + \sum_{\Delta=t}^k T(\Delta) \\
  &\leq \sum_{\Delta=0}^s T(0) (1-\delta)^\Delta + \sum_{\Delta=t}^k T(k) (1+\delta)^{\Delta-k} + \sum_{\Delta= s+1}^{t-1} T(\Delta)  \\
  &\leq O\big( T(0) + T(k) + (t-s)\max(T(s), T(t)) \big) .
\end{align*}
To bound $t-s$, note that for all $s < \Delta < t$,
\[
  (1-\delta) \frac{1}{\sqrt 2} \left(\frac{n}{k}\right)
  < \binom{n}{k}^{\alpha^2 2^{\Delta-1-k}} < (1+\delta) \frac{1}{\sqrt 2} \left(\frac{n}{k}\right) .
\]
For the left inequality, we require
\begin{align*}
  &(1-\delta) \frac{1}{\sqrt 2} \left(\frac{n}{k}\right)
    < \left(\frac{e n}{k}\right)^{\frac{k}{2^k} \alpha^2 2^{\Delta-1}}
  &\equiv \log\left(\frac{n}{k}\right) - \log\left(\frac{\sqrt 2}{1-\delta}\right)
    < \frac{k}{2^k} \alpha^2 2^{\Delta-1} \left(\log\left(\frac{n}{k}\right) + \log(e)\right) ,
\end{align*}
so in particular $\Delta > k - \log(k) + C$ for some constant $C$.
For the right inequality, we require
\begin{align*}
  \left(\frac{n}{k}\right)^{\frac{k}{2^k} \alpha^2 2^{\Delta-1}}
  < (1+\delta) \frac{1}{\sqrt 2} \left(\frac{n}{k}\right)
  &\equiv \frac{k}{2^k} \alpha^2 2^{\Delta-1} \log\left(\frac{n}{k}\right)
  < \log\left(\frac{n}{k}\right) + \log\left(\frac{1+\delta}{\sqrt 2}\right) ,
\end{align*}
so in particular we require $\Delta < k - \log(k) + C'$ for some constant $C'$. Therefore $t-s \leq
C'-C = O(1)$. So what remains is to bound $O(T(0) + T(k))$. By the assumption $\log\binom{n}{k} \leq
\beta 2^k$,
\begin{align*}
  T(0) = \binom{n}{k}^{\alpha^2 2^{-k}} \leq 2^{\alpha^2 \beta } .
\end{align*}
By the assumption $\log(n/k) \geq \log(e)$,
\begin{align*}
  T(k) = \sqrt{2^k} \left(\frac{k}{n}\right)^k \binom{n}{k}^{\alpha^2}
  \leq \sqrt{2^k} \left(\frac{k}{n}\right)^k \left(\frac{en}{k}\right)^{\alpha^2 k }
  = 2^{k\left(\tfrac{1}{2} + \alpha^2\log(e) - (1-\alpha^2)\log(n/k)\right)}  < 1 .
\end{align*}
We may now conclude that \cref{eq:lb-goal-sum-vanishes} is satisfied, since
\[
  \alpha \beta \sum_{\Delta=0}^k T(\Delta) = \alpha \cdot O(T(0)+T(k) + (t-s) \max(T(0), T(k))) 
= O(\alpha) . \qedhere
\]
\end{proof}
\ignore{

For $\gamma > \log (\ln\binom{k}{n})$ the sum is geometric and converges to $O(\alpha)$, so
we must only bound the sum for $\gamma \leq \log\ln\binom{n}{k}$. In this case,
\[
  \gamma \leq \log\ln\binom{n}{k} < \log(\beta 2^k) = k - \log(1/\beta) \,.
\]
Since $\binom{n}{k} \leq \left(\frac{e n}{k}\right)^k$, we have
$\ln\binom{n}{k} \leq k \ln(en/k)$. Then the sum is at most
\[
  \sum_{\gamma=0}^{\log\ln\binom{n}{k}}
    \exp\left(\alpha\frac{1}{2^\gamma} k \ln(en/k) + \tfrac12 \ln\ln\binom{n}{k} - \tfrac12
\ln(2) \gamma - (k-\gamma)\ln(n/k)\right) .
\]
The exponent is
\begin{align*}
&\left(\frac{\alpha}{2^\gamma}k + \gamma - k\right) \ln(n/k) + \frac{\alpha}{2^\gamma} k +
\tfrac12 \ln\ln\binom{n}{k}  - \tfrac12 \ln(2) \gamma \,,
\end{align*}
We split into two more cases, $\gamma \leq \log k$ and $\gamma > \log k$. For $\gamma \leq \log k$,
using the assumption that $\ln(n/k) \geq 1$ (\ie $n > ek$),
the exponent is at most
\begin{align*}
  &\left(\alpha k - k + \log k \right) \ln(n/k) + \alpha k + \tfrac12 \ln\ln\binom{n}{k} - \tfrac12
\ln(2) \gamma \\
  &\qquad\leq 
  \left(2\alpha k - k + \log k \right) \ln(n/k) + \tfrac12 \ln(2^k) - \tfrac12
\ln(2) \gamma \\
  &\qquad\leq 
  \left(2\alpha k - k/2 + \log k \right) \ln(n/k) - \tfrac12
\ln(2) \gamma \\
  &\qquad\leq 
  -\beta \ln(n/k) - \tfrac12 \ln(2)\gamma \,,
\end{align*}
where $\beta > 0$ is some constant, and we have assumed $\alpha < 1/5$. Therefore the sum for
$\gamma \leq \log k$ converges to $O\left((k/n)^\beta\right)$ for some constant $\beta$.

Now consider $\log k < \gamma < \log\ln\binom{n}{k}$. Write $\gamma = \log\ln\binom{n}{k} - t$.
Since $\ln\binom{n}{k} \leq \beta 2^k$ for some constant $\beta < 0$, we have
$\gamma \leq \log(\beta 2^k) - t = k - b - t$ where we write $b = \log(1/\beta)$.
Then the exponent is at most
\begin{align*}
  &( \alpha - k + \gamma )\ln(n/k) + \alpha + \tfrac12 \left(\ln\ln\binom{n}{k} - \ln\ln\binom{n}{k} +
t\ln(2)\right) \\
  &\qquad\leq (2\alpha - b - t)\ln(n/k) + \tfrac12\ln(2) t \\
  &\qquad\leq (2\alpha - b - t/2)\ln(n/k) .
\end{align*}
The sum of this exponential over values $t$ converges to at most $O((k/n)^{2\alpha-b})$, which
completes the proof.
}

\subsection{Dependence on $\epsilon$ for Product Distributions}

The above argument suffices to get a lower bound of $\Omega\left(\sqrt{2^k \log \binom{n}{k}} +
\log\binom{n}{k}\right)$ for any sufficiently small constant $\epsilon > 0$, even when the
underlying distribution is known to be uniform. Now we will show how to obtain a lower bound of
\begin{equation}
\Omega\left( \frac{1}{\epsilon} \left( \sqrt{2^k \log\binom{n}{k}} + \log\binom{n}{k} \right) \right)
\end{equation}
for a fixed product distribution known to the algorithm, completing the proof of
\cref{thm:intro-main}. This requires different arguments for testing and for feature selection.

We remark that the multiplicative dependence of $1/\epsilon$ is not possible for $\epsilon < 2^{-k}$
when the underlying distribution is uniform; see \cref{section:lb-eps-unif} for a proof of this.

\subsubsection{Lower Bound for Testing}

\begin{boxlemma}
\label{lemma:lb-eps-testing}
For any constant $\tau \in (0,1)$, and any $n$, there exists a product distribution $\mu$ over $\zo^n$
such that, for all $k \leq (1-\tau)n$ and $\epsilon > 2^{-\tau n}$, any $\epsilon$-tester
for $k$-juntas over $\mu$ requires sample size at least
\[
\Omega\left( \frac{1}{\epsilon} \left(\sqrt{ 2^k \log\binom{n}{k} } + \log\binom{n}{k} \right) \right) .
\]
\end{boxlemma}

For any $n$, $k$, and $q = q(n)$, we define a product distribution $\mu_q$ over $\zo^{n+q}$.
For convenience, we write $L = [n]$ for the first $n$ bits and $R = [n+q]\setminus [n]$ for the last
$q$ bits.  Each string $\bm{x} \sim \mu_q$ is chosen as follows:
\begin{itemize}
    \item For $i \in [q]$, we draw $\bm{x}_{n+i} \sim \Ber(2^{-i})$;
    \item For $i \in [n]$, we draw $\bm{x}_i \sim \Ber(1/2)$.
\end{itemize}
In other words, the distribution of $\bm{x}$ is a product of Bernoullis, which are uniformly random
in the first $n$ bits, and with exponentially decreasing parameters in the last $q$ bits.

Now we define a way to transform functions over $\zo^n$ into functions over $\zo^{n+q}$. Let $n+i^*
\in R$ be the coordinate such that $\epsilon/2 \leq 2^{-i^*} < \epsilon$ and let $f' \colon \zo^n
\to \zo$ be any function. Then we define $f \colon \zo^{n+q} \to \zo$ as
\begin{equation}
\label{eq:lb-eps-reduction}
  \bm f(x) \define \begin{cases}
      \bm f'(x_L) &\text{ if } x_{n+i^*} = 1 \\
      0 &\text{ if } x_{n+i^*} = 0.
  \end{cases}
\end{equation}
If $f'$ is a $k$-junta, then $f$ is a $(k+1)$-junta since it depends only on bit $n+i^*$ and
the $k$ bits from the prefix $L$. On the other hand, if $f'$ is $\delta$-far from being a $k$-junta over
the uniform distribution, then $f$ is at least $(\epsilon\delta/2)$-far from being a $(k+1)$-junta
over $\mu_q$:

\begin{boxproposition}
\label{prop:lb-reduction-distance}
For any $n,q$ and $1/2 \geq \epsilon > 2^{-q}$, suppose that $g' \colon \zo^n \to \zo$ is $\delta$-far from
being a $k$-junta over the uniform distribution. Let $i^* \define \lceil \log(1/\epsilon) \rceil$
and define $g \colon \zo^{n+q} \to \zo$ as
\[
  g(x) \define \begin{cases}
    g'(x_L) &\text{ if } x_{i^*} = 1 \\
    0       &\text{ if } x_{i^*} = 0 .
  \end{cases}
\]
Then $g$ is $(\epsilon \delta/2)$-far from being a $(k+1)$-junta over $\mu_q$.
\end{boxproposition}
\begin{proof}
Let $f \colon \zo^{n+q} \to \zo$ be any $(k+1)$-junta and write $S \subseteq [n+q]$ for its set of
relevant variables. First assume $S \cap R \neq \emptyset$ so that $|S \cap L| \leq k$. Since $g'$
is $\delta$-far from being a $k$-junta, we have for any $x_R \in \zo^q$,
\[
  \Pru{\bm{x}_L \sim \unif(\zo^n)}{ f(\bm{x}_L, x_R) = g'(\bm{x}_L) } > \delta .
\]
Now let $\bm{x} \sim \mu_q$ so that $\bm{x}_L$ is uniformly and independently distributed. Note that
$\Pr{x_{n+i^*} = 1} \geq \epsilon/2$. Then
\begin{align*}
\Pru{\bm x}{ f(\bm x) \neq g(\bm x) } &\geq 
  \Pr{\bm{x}_{n+i^*} = 1} \cdot \Exuc{\bm x}{ f(\bm{x}_L, \bm{x}_R) = g'(\bm{x}_L) }{\bm{x}_{n+i^*} = 1} 
\geq \frac{\epsilon}{2} \cdot \delta .
\end{align*}
Now suppose $S \cap R = \emptyset$ so that $f(x_L,x_R) = f'(x_L)$ for some $f'$. Fix any $x_L \in \zo^n$.
Then
\begin{align*}
  \Pru{\bm{x}_R}{ f(x_L, \bm{x}_R) \neq g(x_L, \bm{x}_R) }
  &= \Pr{\bm{x}_{n+i^*} = 0} \cdot \ind{f'(x_L) \neq 0}
    + \Pr{\bm{x}_{n+i^*} = 1} \cdot \ind{f'(x_L) \neq g'(x_R)} .
\end{align*}
If $g'(x_L) = 0$ then this is 1 when $f'(x_L) = 1$ and 0 otherwise. If $g'(x_L) = 1$ then this is at
least $\epsilon/2$ regardless of $f'(x_L)$. Since $g'$ is $\delta$-far from any $k$-junta, it must
take value 1 with probability at least $\delta$ over $\bm{x}_L$. Therefore
\begin{align*}
\dist_{\mu_q}(f,g) \geq \Pru{\bm{x}_L}{ g'(\bm{x}_L) = 1 } \cdot \epsilon/2 \geq \delta\epsilon/2 ,
\end{align*}
as desired.
\end{proof}

\begin{boxproposition}
\label{prop:lb-eps-testing-reduction}
For any $n, k, q \in \bN$, such that $k < n$, any $\delta \in (0,1)$, and $\epsilon > 2^{-q}$,
suppose that there exists a $(k+1)$-junta tester with distance parameter $(\epsilon\delta/2)$ under
distribution $\mu_q$ on $\zo^{n+q}$ which draws $m(n,k,\epsilon)$ samples. Then there is a $k$-junta
tester with distance parameter $\delta$ under $\unif(\zo^n)$ which draws $O( \epsilon \cdot
m(n,k,\epsilon\delta/2) )$ samples.
\end{boxproposition}
\begin{proof}
We design a tester for the uniform distribution over $\zo^n$ as follows. Let $f' \colon \zo^n \to
\zo$ be the input function and let $f \colon \zo^{n+q} \to \zo$ be the corresponding function
defined in \cref{eq:lb-eps-reduction}. Observe that, given access to uniform samples from $\zo^n$ labelled by
$f'$, we may simulate a sample $(\bm x, f(\bm x))$ with $\bm x \sim \mu_q$ as follows:
\begin{enumerate}
\item For each $i \in [q]$, sample $\bm{x}_{n+i} \sim \Ber(2^{-i})$. 
\item If $\bm{x}_{n+i^*} = 1$, sample $\bm{x}_L \sim \unif(\zo^n)$ and return
    $(\bm x, \bm f'(x_L))$; \label{item:sample and compute f}
\item Otherwise, if $\bm{x}_{n+i^*} = 1$, sample $\bm{x}_L \sim \unif(\zo^n)$ and return $(\bm x, 0)$.
\end{enumerate}
If $f'$ is a $k$-junta then $f$ is a $(k+1)$-junta, so the tester should accept with the correct
probability. If $f'$ is $\delta$-far from being a $k$-junta then by
\cref{prop:lb-reduction-distance} $f$ is $\epsilon\delta/2$-far from being a $(k+1)$-junta, so the
tester will reject with the correct probability.

Finally, \Cref{item:sample and compute f} is executed with probability $2^{-i^*} = O(\varepsilon)$. Thus, by standard concentration bounds, \Cref{item:sample and compute f} is executed at most $O(\varepsilon) \cdot m (n, k, \varepsilon\delta / 2)$ times in total.
\end{proof}

To complete the proof of \cref{lemma:lb-eps-testing}, let $n \in \bN$, let $\tau \in (0,1)$ be any
constant, let $k < (1-\tau)n$, let $\epsilon > 2^{-\tau n}$, and let $\delta > 0$ be a sufficiently
small constant. Set $q = \tau n$ so that $n = n' + q$ for $n' = (1-\tau)n$. Then by applying
\cref{prop:lb-eps-testing-reduction}, we obtain a $(k-1)$-junta tester with distance parameter
$\delta/2$ for the uniform distribution on $n' = (1-\tau)n$ bits, so that our lower bound from
\cref{thm:lb-testing-unif} applies.

\subsubsection{Lower Bound for Feature Selection}

\begin{boxlemma}
\label{lemma:lb-eps-testing}
For any constant $\tau \in (0,1)$, and any $n$, there exists a product distribution over $\zo^n$
such that, for all $k \leq (1-\tau)n$ and $\epsilon > 2^{-\tau n}$, any 
$k$-feature selector with parameter $\varepsilon$
requires sample size at least
\[
\Omega\left( \frac{1}{\epsilon} \left(\sqrt{ 2^k \log\binom{n}{k} } + \log\binom{n}{k} \right) \right) .
\]
\end{boxlemma}

We will use a similar reduction as for testing $k$-juntas.

\begin{boxproposition}
\label{prop:lb-eps-feature-reduction}
For any $n, k, q \in \bN$, such that $k < n$, any $\delta \in (0,1)$, and $\epsilon > 2^{-q}$,
suppose that there exists a $(k+1)$-feature selector with distance parameter $(\epsilon\delta/2)$ under
distribution $\mu_q$ on $\zo^{n+q}$ which draws $m(n,k,\epsilon)$ samples. Then there is a
$k$-feature selector with distance parameter $\delta$ under $\unif(\zo^n)$ which draws $O( \epsilon
\cdot m(n,k,\epsilon\delta/2) )$ samples.
\end{boxproposition}
\begin{proof}
We follow the strategy of the reduction for testing. Our goal is to design a $k$-feature selector
for the uniform distribution over $\zo^n$, with distance parameter $\delta$, by reduction to a
$(k+1)$-feature selector for $\mu_q$ over $\zo^{n+q}$ with distance parameter $\epsilon\delta/2$.
Given access to uniform samples from $\zo^n$ labelled by $f'$, we simulate samples $(\bm x, f(\bm
x))$ with $\bm x \sim \mu_q$ as in \cref{prop:lb-eps-testing-reduction}, where $f$ and $f'$ are
again defined as in \cref{eq:lb-eps-reduction}, with $i^* \in [q]$ being the coordinate such that
$\epsilon/2 \leq 2^{-i^*} < \epsilon$. We send the simulated samples of $f$ to the $(k+1)$-feature
selector for $\mu_q$, which produces a set $S \subseteq [n+q]$ of $|S| = k+1$ variables. We then
output the set $S \cap [n]$ unless $|S \cap [n]| = k+1$, in which case we output $\emptyset$.

Assume that the $(k+1)$-feature selector for $\mu_q$ succeeds, so that $f$ is
$(\epsilon\delta/2)$-close to some $(k+1)$-junta $g \colon \zo^{n+q} \to \zo$ on variables $S$.  Our
goal is to show that $S \setminus [q]$ has $|S \cap L| \leq k$ and that $f'$ is $\delta$-close to a
$k$-junta on variables $S \setminus [q]$.

First suppose that $|S \cap L| = k+1$, so that our algorithm outputs $\emptyset$ as the set
of relevant variables for the input $f'$. Let $g$ be the $(k+1)$-junta on variables $S$ minimizing
distance to $f$ over $\mu_q$. In this case $g$ does not depend on $i^* \in [q]$.  Therefore, for any
setting $z \in \zo^{n+q}$,
\[
  \epsilon\delta/2 \geq \Pruc{\bm x \sim \mu_q}{ g(\bm x) \neq f'(\bm x) }{ \bm{x}_S = z_S }
  = \Pruc{\bm x \sim \mu_q}{ g(z) \neq f(\bm x) }{ \bm{x}_S = z_S } .
\]
If $g(z) = 1$ this leads to a contradiction since $f'(\bm x) = 0$ when $\bm{x}_{i^*} = 0$ which
occurs with probability $\geq \epsilon/2 > \epsilon\delta/2$. So it must be the case that $g$ is
the constant 0 function. Now
\begin{align*}
  \Pru{\bm{x}_L \sim \unif(\zo^n)}{ f'(\bm x) = 1 }
  &= \Pruc{\bm{x} \sim \mu_q}{ f(\bm x) = 1 }{ \bm{x}_{i^*} = 1 } \\
  &\leq \frac{2}{\epsilon} \Pru{\bm{x} \sim \mu_q}{ f(\bm x) = 1 }
  = \frac{2}{\epsilon} \Pru{\bm{x} \sim \mu_q}{ f(\bm x) \neq g(\bm x) }
  \leq \frac{2}{\epsilon} \frac{\epsilon\delta}{2} = \delta ,
\end{align*}
so the input $f'$ is $\delta$-close to constant and the algorithm succeeds.

Next suppose that $|S \cap L| \leq k$ so that the algorithm outputs $S \cap L$. Let $g$ be the
$(k+1)$-junta on variables $S$ minimizing distance to $f$ over $\mu_q$. Then
\begin{align*}
\frac{\epsilon\delta}{2}
&\geq \Exu{\bm{x}_R }{ \Pru{\bm{x}_L}{ g(\bm{x}_L, \bm{x}_R) \neq f(\bm{x}_L, \bm{x}_R) }} \\
&= \Pr{\bm{x}_{i^*} = 0} \cdot
      \Exuc{\bm{x}_R }{ \Pru{\bm{x}_L}{ g(\bm{x}_L, \bm{x}_R) \neq 0 }}{ \bm{x}_{i^*} = 0 } \\
&\qquad+ \Pr{\bm{x}_{i^*} = 1} \cdot
      \Exuc{\bm{x}_R }{ \Pru{\bm{x}_L}{ g(\bm{x}_L, \bm{x}_R) \neq f(\bm{x}_L, \bm{x}_R) }}{ \bm{x}_{i^*} = 1 } \\
&\geq \frac{\epsilon}{2} \cdot
  \Exuc{\bm{x}_R }{ \Pru{\bm{x}_L}{ g(\bm{x}_L, \bm{x}_R) \neq f(\bm{x}_L, \bm{x}_R) }}{ \bm{x}_{i^*} = 1 } .
\end{align*}
Then there exists a fixed assignment $\bm{x}_R = z$ such that $\Pru{\bm{x}_L \sim \unif(\zo^n)}{
g(\bm{x}_L, z) = f(\bm{x}_L, z) } \leq \delta$. For fixed $z$, $g(\cdot, z)$ depends only on the
variables $S \cap L$ while $f(\cdot, z) = f'(\cdot)$, so this proves correctness of the output $S
\cap L$.

Similarly to the proof of \Cref{prop:lb-eps-testing-reduction}, we can bound the total number of samples needed using standard concentration bounds.
\end{proof}

To complete the proof of \cref{lemma:lb-eps-testing}, we combine the reduction in
\cref{prop:lb-eps-feature-reduction} with our lower bound in \cref{thm:lb-testing-unif}, using the
same calculations as in the lower bound for testing to get the condition $k < (1-\tau)n/2$.

\appendix

\section{Lower Bound on Testing Junta Truncation}
\label{section:junta-truncation}

Given parameters $k,n \in \bN$ and $\epsilon > 0$, an algorithm \emph{tests $k$-junta truncation}
if for every probability distribution $\cD$ over $\zo^n$ it draws $m = m(n,k,\epsilon)$ samples from
$\cD$, and its output satisfies:
\begin{enumerate}
    \item If there exists a $k$-junta $f \colon \zo^n \to \zo$ such that $\cD$ is the uniform
        distribution over $f^{-1}(1)$, output $\ACCEPT$ with probability at least $3/4$.
    \item If $\cD$ is the uniform distribution, output $\REJECT$ with probability at least $3/4$.
\end{enumerate}

We may now prove a lower bound of
    \[
        \Omega\left(\sqrt{2^k \log\binom{n}{k}} + \log\binom{n}{k} \right)
    \]
for testing junta trunction.

\begin{proof}[Proof of \cref{thm:intro-junta-truncation}]
    Fix a $k$-junta setup $(\cJ, \cF)$ on domain $\zo^n$.  Consider the following task. We are given
    $m$ uniformly samples $\bm X \sim \zo^n$ together with a random sequence of labels $\bm \ell \in \zo^m$
    generated either as:
    \begin{enumerate}
        \item $\bm \ell \sim \zo^m$ uniformly at random. We write this distribution over $(\zo^n)^m
            \times \zo^m$ as $\cD_\unif$.
        \item $\bm \ell = \bm{f}_{\bm S}(\bm X)$ where $\bm f \sim \cF$ and $\bm S \sim \cJ$. We
            write this distribution over $(\zo^n)^m \times \zo^m$ as $\cD_J$.
    \end{enumerate}
    Our task is to distinguish which of these cases we are in.  From our proof of
    \cref{thm:lb-testing-unif}, using \cref{prop:lb-junta-setups}, we have the following statement.

    \begin{claim}
        If $(\cJ, \cF)$ and $m$ are chosen to satisfy \cref{prop:lb-junta-setups} with distance
        parameter $\epsilon < 1/100$, then the TV distance between the distributions $\cD_\unif$ and $\cD_J$
        is at most $2/100$.
    \end{claim}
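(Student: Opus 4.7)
The plan is a short calculation that reduces the stated TV bound to \cref{prop:lb-junta-setups}, which is already available to us. Observe that under both $\cD_\unif$ and $\cD_J$ the marginal distribution on $\bm X$ is the uniform distribution on $(\zo^n)^m$, because in case~2 the juntas $\bm f_{\bm S}$ are chosen independently of $\bm X$. Consequently the joint TV distance factors through the conditional:
\begin{align*}
\|\cD_\unif - \cD_J\|_\TV
  = \Exu{\bm X \sim \unif((\zo^n)^m)}{\,\| \cD_\unif(\cdot \mid \bm X) - \cD_J(\cdot \mid \bm X) \|_\TV\,}\,.
\end{align*}
Under $\cD_\unif$, the conditional distribution of $\bm \ell$ given $\bm X = X$ is exactly $\unif(\zo^m)$, and under $\cD_J$ it is exactly the distribution of $\bm f_{\bm S}(X)$ induced by $\bm f \sim \cF$, $\bm S \sim \cJ$. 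Therefore the right-hand side equals $\Ex{ \| \unif(\zo^m) - \bm f_{\bm S}(\bm X) \|_\TV }$.

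Now I would split this expectation according to the good event of \cref{prop:lb-junta-setups}. That proposition, applied with constant $\epsilon < 1/100$, guarantees that
\[
\Pru{\bm X}{\, \| \unif(\zo^m) - \bm f_{\bm S}(\bm X) \|_\TV > \epsilon \,} < 1/100 \,.
\]
On the complementary (good) event the conditional TV distance is at most $\epsilon$; on the bad event we use the trivial bound $1$. Thus
\[
\Ex{ \| \unif(\zo^m) - \bm f_{\bm S}(\bm X) \|_\TV } \le \epsilon \cdot 1 + 1 \cdot \tfrac{1}{100} < \tfrac{1}{100} + \tfrac{1}{100} = \tfrac{2}{100}\,,
\]
which establishes the claim.

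There is no substantial obstacle here: the only content is the observation that $\bm X$ has the same marginal under the two distributions, so that the joint TV distance collapses to an average of conditional TV distances, to which \cref{prop:lb-junta-setups} directly applies. The one thing worth double-checking when writing the full proof is that $\bm S$ and $\bm f$ are sampled independently of $\bm X$ in the definition of $\cD_J$ (they are, by the description preceding the claim), since this independence is what ensures the marginals on $\bm X$ agree. Everything else is routine, and the resulting TV bound of $2/100$ (which is safely bounded away from $1/4$) then lets the enclosing theorem conclude that no two-sided tester with success probability $3/4$ can distinguish the two cases using only $m$ samples, yielding the $\Omega\bigl(\sqrt{2^k \log\binom{n}{k}} + \log\binom{n}{k}\bigr)$ lower bound once the junta setups from \cref{ex:parities} and \cref{lemma:lb-unlikely-collisions} are plugged in.
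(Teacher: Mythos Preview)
Your proof is correct and takes essentially the same approach as the paper: both reduce the joint TV distance to $\Exu{\bm X}{\|\unif(\zo^m) - \bm f_{\bm S}(\bm X)\|_\TV}$ (you via the $\ell_1$ formula for TV when the $\bm X$-marginals agree, the paper via bounding the probability difference for an arbitrary event $E$), and then both split according to the good event of \cref{prop:lb-junta-setups} to obtain the bound $\epsilon + 1/100 < 2/100$.
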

    \begin{proof}[Proof of claim]
      Consider any event $E \in (\zo^n)^m \times \zo^m$. We have
      \begin{align*}
        &\left|\Pru{(\bm X, \bm \ell) \sim \cD_\unif}{ (\bm X, \bm \ell) \in E }
            - \Pru{(\bm X, \bm \ell) \sim \cD_J}{ (\bm X, \bm \ell) \in E } \right| \\
        &\qquad= \left|\Exu{\bm X}{ \Pru{\bm \ell \sim \unif(\zo^m)}{  (\bm X, \bm \ell) \in E }
            - \Pru{\bm{f}_{\bm S} \sim \cD_J}{ (\bm X, \bm{f}_{\bm S}(\bm X)) \in E } } \right| \\
        &\qquad\leq \Exu{\bm X}{ \left|\Pru{\bm \ell \sim \unif(\zo^m)}{  (\bm X, \bm \ell) \in E } 
            - \Pru{\bm{f}_{\bm S} \sim \cD_J}{ (\bm X, \bm{f}_{\bm S}(\bm X)) \in E } \right| }  \\
        &\leq \Exu{\bm X}{ \| \unif(\zo^m) - \bm{f}_{\bm S}(\bm X) \|_\TV } 
        \leq \frac{1}{100} + \epsilon \,,
      \end{align*}
      where the final inequality is due to \cref{prop:lb-junta-setups}.
    \end{proof}

    Now we reduce this task to testing junta truncation. Given $(\bm X, \bm \ell)$, 
    we take the first subset of $m/100$ samples $\bm x \in \bm X$ whose label in $\bm \ell$ is 1; in
    both cases we will have at least $m/100$ such samples with high probability. 
    Now observe,
    \begin{enumerate}
        \item If $\bm \ell$ was chosen uniformly at random, then the subset of samples we send is
            sampled from the uniform distribution over $\zo^n$.
        \item If $\bm \ell = \bm{f}_{\bm S}$ then the subset of samples we send is by definition
            drawn from a $k$-junta truncation of the uniform distribution.
    \end{enumerate}
    If the junta truncation tester succeeds using $m/100$ samples, then it will succeed in
    distinguishing these cases. Choosing $(\cJ, \cF)$ as in our proof of \cref{thm:intro-main} therefore
    produces the desired lower bound.
\end{proof}

\section{Missing Proofs from \cref{section:lb-testing}}
\label{section:missing-proofs-lb-testing}

\subsection{Lower Bound on Junta Testing}

\begin{proof}[Proof of \cref{lemma:lb-k-junta}]
Write $\bm{f}_{\bm S} \colon \zo^n \to \zo$ for the distribution of the function drawn from
$\cD(\cJ,\cF)$.  Let $\bm g \colon \zo^n \to \zo$ be a uniformly random function. Then
\begin{claim}
\label{claim:lb-junta-far}
For every constant $\epsilon \in (0,1/4)$, sufficiently large $n$, and $k < n-2$, $\bm g$ is
$\epsilon$-far from being a $k$-junta with probability at least $99/100$.
\end{claim}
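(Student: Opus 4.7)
The plan is a direct union bound over all $k$-juntas, using the fact that a uniformly random $\bm g$ looks like an unbiased coin on every input, so its distance to any fixed function concentrates around $1/2$.

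First I would fix any $k$-junta $h \colon \zo^n \to \zo$ and observe that
\[
  \dist(\bm g, h) = \frac{1}{2^n} \sum_{x \in \zo^n} \ind{\bm g(x) \neq h(x)}
\]
is the average of $2^n$ independent $\zo$-valued random variables of mean $1/2$. By Hoeffding's inequality,
\[
  \Pr{ \dist(\bm g, h) \leq \epsilon } \leq \exp\!\left( -2 \cdot 2^n (1/2 - \epsilon)^2 \right).
\]
Then I would union-bound over all $k$-juntas. The number of $k$-juntas on $\zo^n$ is at most $\binom{n}{k} \cdot 2^{2^k}$ (choose the set of relevant variables, then choose the truth table), so
\[
  \Pr{ \exists\, k\text{-junta } h \colon \dist(\bm g, h) \leq \epsilon }
    \leq \binom{n}{k} \cdot 2^{2^k} \cdot \exp\!\left( -2 \cdot 2^n (1/2 - \epsilon)^2 \right).
\]

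The remaining task is to check that the exponent dominates. Taking logarithms, this is at most $1/100$ provided
\[
  2^k \ln 2 + k \ln n + \ln 100 \;<\; 2 \cdot 2^n (1/2-\epsilon)^2.
\]
The condition $k < n-2$ (with $k$ integer) gives $k \leq n-3$, hence $2^k \leq 2^{n-3} = 2^n/8$, and the condition $\epsilon < 1/4$ gives $(1/2-\epsilon)^2 > 1/16$, so $2 \cdot 2^n (1/2-\epsilon)^2 > 2^n/8$. Because $\epsilon$ is strictly less than $1/4$, there is a constant $\delta = \delta(\epsilon) > 0$ with $2 \cdot 2^n (1/2-\epsilon)^2 \geq (1 + \delta)\, 2^n/8$, while $2^k \ln 2 \leq (\ln 2) \cdot 2^n/8 < 2^n/8$. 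The resulting gap is $\Omega(2^n)$, which easily absorbs the lower-order $k \ln n + \ln 100$ term once $n$ is sufficiently large.

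I do not expect a genuine obstacle: the only thing that needs care is the arithmetic comparing the $2^{2^k}$ factor from counting truth tables against the Hoeffding gain $e^{-\Theta(2^n)}$, and the hypothesis $k < n-2$ with $\epsilon < 1/4$ is exactly what makes that comparison work with room to spare.
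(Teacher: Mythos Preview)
Your proposal is correct and takes essentially the same approach as the paper: a union bound over all $\binom{n}{k} 2^{2^k}$ $k$-juntas, combined with a bound on the probability that a uniformly random function lands in the $\epsilon$-ball around a fixed function. The only cosmetic difference is that the paper bounds that probability by counting the Hamming ball volume $\binom{2^n}{\epsilon 2^n}/2^{2^n}$ directly, whereas you invoke Hoeffding; both yield an $\exp(-\Omega(2^n))$ bound and the subsequent arithmetic is the same.
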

\begin{proof}[Proof of claim]
The number of $k$-juntas is at most $\binom{n}{k} \cdot 2^{2^k}$ and the number of functions
$\epsilon$-close to being a $k$-junta is at most
\[
\binom{n}{k} \cdot 2^{2^k} \cdot \binom{2^n}{\epsilon 2^n} \leq n^k 2^{2^k} 2^{\epsilon 2^n \log(e/\epsilon)}
\leq n^k 2^{2^n( \tfrac{1}{8}  + \tfrac{1}{4} \log(4e) )} .
\]
On the other hand, the number of functions is $2^{2^n}$. For sufficiently large $n$, the probability
that $\bm g$ is $\epsilon$-close to being a $k$-junta is at most $1/100$.
\end{proof}

\begin{claim}
For every $X = (x_1, \dotsc, x_m)$,
\[
  \| \bm{g}(X) - \unif(\zo^m) \|_\TV \leq \| \bm{f}_{\bm S}(X) - \unif(\zo^m) \|_\TV .
\]
\end{claim}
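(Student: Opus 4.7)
The plan is to observe that both random labelings are supported on a common ``consistency'' set $A_X \subseteq \zo^m$, and that $\bm g(X)$ happens to be the \emph{particular} distribution on $A_X$ that sits closest to $\unif(\zo^m)$.

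First, I would define
\[
A_X \define \{ \ell \in \zo^m \,:\, \ell_i = \ell_j \text{ whenever } x_i = x_j \}.
\]
Any labeling of the form $h(X)$ coming from a function $h \colon \zo^n \to \zo$ must respect these equalities, so both $\bm g(X)$ and $\bm{f}_{\bm S}(X)$ are supported on $A_X$.

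Second, because $\bm g$ is a uniformly random function, the values of $\bm g$ on the distinct elements of $X$ are independent uniform bits, so $\bm g(X)$ is \emph{exactly} the uniform distribution on $A_X$. A one-line calculation (each $\ell \in A_X$ contributes $|1/|A_X| - 2^{-m}|$ and each $\ell \notin A_X$ contributes $2^{-m}$) then yields
\[
\| \bm g(X) - \unif(\zo^m) \|_\TV \;=\; 1 - \frac{|A_X|}{2^m}.
\]

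Third, I would lower bound the right-hand side using the elementary inequality $\|P-Q\|_\TV \ge |P(E) - Q(E)|$ applied to $E = A_X$. Since $\bm{f}_{\bm S}(X)$ is supported on $A_X$, whereas $\unif(\zo^m)$ assigns $A_X$ total mass $|A_X|/2^m$, this gives
\[
\| \bm{f}_{\bm S}(X) - \unif(\zo^m) \|_\TV \;\ge\; 1 - \frac{|A_X|}{2^m},
\]
which matches the previous identity and immediately yields the claim. There is no real obstacle here; the content of the statement is simply the standard ``projection'' fact that among all distributions supported on $A_X$, the uniform one on $A_X$ minimizes TV distance to $\unif(\zo^m)$, and this minimum is realized by $\bm g(X)$.
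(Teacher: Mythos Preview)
Your proposal is correct and follows essentially the same approach as the paper: both define the set of ``feasible'' (consistent) labelings, note that $\bm g(X)$ is uniform on it while $\bm f_{\bm S}(X)$ is merely supported on it, and then observe that any distribution supported on this set has TV distance at least $1 - |A_X|/2^m$ from $\unif(\zo^m)$, with equality achieved by $\bm g(X)$. Your use of the event-based lower bound $\|P-Q\|_\TV \ge |P(A_X)-Q(A_X)|$ is a slightly slicker packaging of the same inequality the paper obtains by dropping absolute values in the sum over $A_X$.
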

\begin{proof}[Proof of claim]
We say a label $\ell \in \zo^m$ is feasible if for all $i,j \in [m]$, $x_i = x_j \implies \ell_i =
\ell_j$, \ie any two identical sample points are assigned the same label. Let $F$ be the set of
feasible labels. Then $\bm{g}(X)$ is uniform over all feasible labels. $\bm{f}_{\bm S}(X)$ is
supported on the set of feasible labels, so writing  $p(\ell) = \Pr{ \bm{f}_{\bm S}(X) = \ell}$,
\begin{align*}
  \|\bm{f}_{\bm S}(X) - \unif(\zo^m)\|_\TV
  &= \sum_{\ell \in F} \left| p(\ell) - \frac{1}{N} \right|
  \geq \sum_{\ell \in F} \left(p(\ell) - \frac{1}{N}\right)
  = 1 - \frac{|F|}{N} \\
  &= \sum_{\ell \in F} \left( \frac{1}{|F|} - \frac{1}{N} \right)
  = \|\bm{g}(X) - \unif(\zo^m) \|_\TV . \qedhere
\end{align*}
\end{proof}

Now let $A$ be any algorithm which receives $m$ samples $\bm X$ together with a set of labels $\bm f_{\bm S}(\bm X)$. By
\cref{prop:lb-junta-setups},
\begin{align*}
  &\left| \Pru{\bm X, \bm{f}_{\bm S}}{A(\bm X, \bm{f}_{\bm S}(\bm X)) = 1}
        - \Pru{\bm X, \bm g}{ A(\bm X, \bm g(\bm X)) = 1 } \right| \\
  &\qquad\leq \Exu{\bm X}{ \left| \Pru{\bm{f}_{\bm S}}{ A(\bm X, \bm{f}_{\bm S}(\bm X)) = 1 }
        - \Pru{\bm g}{ A( \bm X, \bm{g}(\bm X)) = 1} \right| } \\
  &\qquad\leq \Exu{\bm X}{ \frac{1}{2} \|\bm{f}_{\bm S}(\bm X) - \bm{g}(\bm X)\|_\TV } \\
  &\qquad\leq \frac{1}{2} \Exu{\bm X}{ \|\bm{f}_{\bm S}(\bm X) - \unif(\zo^m) \|_\TV
        + \| \bm{g}(\bm X) - \unif(\zo^m)\|_\TV } \\
  &\qquad\leq \Exu{\bm X}{ \|\bm{f}_{\bm S}(\bm X) - \unif(\zo^m) \|_\TV }
  \leq \frac{1}{100} + \epsilon
\end{align*}
But from \cref{claim:lb-junta-far}, if $A$ was an $\epsilon$-tester then it should have $\Pru{\bm X,
\bm{f}_{\bm S}}{A(\bm X, \bm{f}_{\bm S}(\bm X)) = 1} \geq 2/3$ and $\Pru{\bm X, \bm g}{A(\bm X, \bm
g(\bm X) = 1} < 1/3$, so any algorithm using only $m$ samples cannot succeed as a tester.
\end{proof}

To apply \cref{lemma:lb-k-feature}, we require that a random $k$-junta drawn from our choice of
junta setup $(\cJ, \cF)$ is $\epsilon$-far from being a $(k/2)$-junta. This holds for our choices of
$(\cJ, \cF)$, where either $\cJ$ is the parity function or $\cJ$ is the set of all balanced
functions.
\begin{proposition}
\label{prop:lb-feature-counting}
Let $\cJ$ be a subset of functions $\zo^k \to \zo$ which is either the singleton set containing only
the parity function, or the set of all balanced functions. Then for any sufficiently small constant
$\epsilon > 0$, a uniformly random function $f \sim \cJ$ is $\epsilon$-far from being a
$\lfloor k/2 \rfloor$-junta with probability at least $99/100$.
\end{proposition}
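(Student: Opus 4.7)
The proof splits naturally into the two cases for $\cJ$. For the parity case, I would invoke the classical fact that the parity function on $k$ variables has Fourier support $\{[k]\}$, and hence is orthogonal to every function depending on a proper subset of the $k$ variables under the uniform distribution on $\zo^k$. In particular, for every $j$-junta $g$ with $j < k$ we have $\Pru{\bm x \sim \zo^k}{f(\bm x) \ne g(\bm x)} = 1/2$ exactly. Since $\lfloor k/2 \rfloor < k$ (for $k \geq 2$), the parity function is $1/2$-far from every $\lfloor k/2 \rfloor$-junta with probability $1$, which is much stronger than what we need for any sufficiently small constant $\epsilon$.

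For the balanced functions case, I would use a counting (volume) argument. Let $B$ denote the set of balanced functions $\zo^k \to \zo$, so $|B| = \binom{2^k}{2^{k-1}} \geq 2^{2^k}/\sqrt{2 \cdot 2^k}$ by Stirling. The set of $\lfloor k/2 \rfloor$-juntas on $\zo^k$ has size at most $\binom{k}{\lfloor k/2 \rfloor} \cdot 2^{2^{\lfloor k/2 \rfloor}} \leq 2^k \cdot 2^{2^{k/2}}$. For any fixed function $g \colon \zo^k \to \zo$, the number of functions within Hamming distance $\epsilon \cdot 2^k$ of $g$ is at most $\binom{2^k}{\lfloor \epsilon 2^k\rfloor} \leq 2^{H(\epsilon) 2^k}$, where $H$ is the binary entropy. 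Therefore the number of balanced functions that are $\epsilon$-close to some $\lfloor k/2 \rfloor$-junta is at most
\[
    2^k \cdot 2^{2^{k/2}} \cdot 2^{H(\epsilon) 2^k}.
\]

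The main step is then just to check that this bound is less than $|B|/100$. Taking logs, it suffices that
\[
    k + 2^{k/2} + H(\epsilon) 2^k \;\leq\; 2^k - \tfrac{1}{2}\log(2 \cdot 2^k) - \log 100 .
\]
Choosing $\epsilon$ small enough so that $H(\epsilon) < 1/2$ (say), the dominant term on the right, $2^k$, beats the dominant term on the left, $H(\epsilon) 2^k$, by a factor of at least $2$, absorbing the lower-order terms $k + 2^{k/2}$ for sufficiently large $k$. Hence the fraction of balanced functions that are $\epsilon$-close to any $\lfloor k/2 \rfloor$-junta is at most $1/100$, giving the desired bound for $\bm f \sim \cJ$.

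The only mild obstacle is a degenerate parameter range: for very small $k$ (where $\lfloor k/2 \rfloor$ is close to $k$), the junta counting bound $2^{2^{k/2}}$ is not yet negligible compared to $|B|$. But since the statement only requires the conclusion for sufficiently small constant $\epsilon$ and, implicitly via its use in \cref{lemma:lb-k-feature}, for sufficiently large $k$, this is not an issue; one simply chooses $\epsilon$ as a function of a universal lower bound $H^{-1}(1/2)$ and then uses large enough $k$ to absorb additive slack.
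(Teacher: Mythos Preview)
Your proposal is correct and follows essentially the same approach as the paper: the parity case is dismissed as trivial (you give the Fourier/orthogonality justification explicitly, the paper just says ``trivial''), and the balanced case is handled by exactly the same counting argument, bounding the number of functions $\epsilon$-close to a $\lfloor k/2\rfloor$-junta by (\#juntas) $\times$ (Hamming ball volume) and comparing to $\binom{2^k}{2^{k-1}}$. The only cosmetic differences are the specific estimates used (you use $|B|\ge 2^{2^k}/\sqrt{2\cdot 2^k}$ and the entropy bound $2^{H(\epsilon)2^k}$; the paper uses $|B|\ge 2^{2^{k-1}}$ and $(e/\epsilon)^{\epsilon 2^k}$), which are interchangeable.
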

\begin{proof}
This is trivial for the parity function. For the set of all balanced functions, a counting argument
suffices. The number of functions $\zo^k \to \zo$ which are $\epsilon$-close to being a $\lfloor k/2
\rfloor$-junta is at most
\begin{align*}
\binom{k}{k/2} 2^{2^{k/2}} \binom{2^k}{\epsilon 2^k} \leq (2e)^{k/2} (e/\epsilon)^{\epsilon 2^k}
\end{align*}
while the number of balanced functions $\zo^k \to \zo$ is at least
\[
\binom{2^k}{2^{k-1}} \geq 2^{2^{k-1}} .
\]
Therefore it suffices to take $\epsilon > 0$ to be a sufficiently small constant.
\end{proof}

\subsection{Lower Bound on Feature Selection}

\begin{proof}[Proof of \cref{lemma:lb-k-feature}]
Consider the following two distributions over $k$-juntas on $2n$ bits. Since $\cJ \subset
\binom{[n]}{k}$ we may consider two copies $\cJ_1, \cJ_2$ of $\cJ$, where $\cJ_1$ is $\cJ$ on the
first half of the $2n$ bits, and $\cJ_2$ is $\cJ$ on the second half of the $2n$ bits. This gives
two balanced $k$-junta setups $(\cJ_1, \cF)$ and $(\cJ_1, \cF)$, where all $k$-juntas in the first
setup depend only on variables in $[n]$, and all $k$-juntas in the second setup depend only on
variables in $[2n] \setminus [n]$. Then we consider distributions $\cD_1 \define \cD(\cJ_1, \cF)$
and $\cD_2 = \cD(\cJ_2, \cF)$. 

The next claim shows that a $k$-feature selector is able to distinguish between functions drawn from
$\cD_1$ and those drawn from $\cD_2$.

\begin{claim}
Suppose $A$ is a $k$-feature selector for domain $\zo^{2n}$ with sample complexity $m$ on parameters
$n,k,\epsilon$.  Then for $f_1 \sim \cD_1$, with probability at least $2/3$ over $f_1$ and the
samples $\bm X$, $A$ outputs a set $S \in \binom{[2n]}{k}$ such that $|S \cap [n]| > k/2$.
Similarly, for $f_2 \sim \cD_2$, with probability at least $2/3$ over $f_2$ and the samples $\bm X$,
$A$ outputs a set $S$ with $|S \cap [n]| < k/2$.
\end{claim}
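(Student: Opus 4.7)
The plan is to argue by contradiction: if the feature selector outputs $S$ with $|S \cap [n]| \le k/2$, then $\bm f_1$ would be forced to be $\epsilon$-close to a $(k/2)$-junta, contradicting the farness hypothesis placed on $\cF$.

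First, I would transfer the farness property from $\cF$ to $\cD_1$. Since $\bm f_1 = \bm{f}_{\bm S}$ depends only on coordinates in $\bm S \subseteq [n]$, for any candidate $(k/2)$-junta $g$ on variables $T \subseteq [2n]$ the optimal choice depends only on $T \cap \bm S$ (extra coordinates, being independent of $\bm f_1$, cannot help); this reduces the problem to matching $\bm f$ against a $(k/2)$-junta on $\zo^k$. So the hypothesis on $\cF$ directly yields $\Pru{\bm f_1 \sim \cD_1}{\bm f_1 \text{ is $\epsilon$-far from every $(k/2)$-junta on } \zo^{2n}} \ge 9/10$.

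Second — the main step — I would establish that if the feature selector succeeds on $\bm f_1$ by outputting some $S$ with $|S \cap [n]| \le k/2$, then the closest $k$-junta $g^\ast$ on variables $S$ to $\bm f_1$ is effectively a $(k/2)$-junta. The reason is that under the uniform distribution on $\zo^{2n}$ the coordinates $\bm x_{S \setminus [n]}$ are independent of $\bm x_{[n]}$, and $\bm f_1$ depends only on $\bm x_{[n]}$; so for each fixed $\bm x_{S \cap [n]}$, the optimal value $g^\ast(\bm x_S)$ is determined by $\Pr{\bm f_1 = b \mid \bm x_{S \cap [n]}}$ alone, independent of $\bm x_{S \setminus [n]}$. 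Hence $g^\ast$ collapses to a junta on $S \cap [n]$ with at most $k/2$ relevant variables.

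Finally, I would combine the two facts via a union bound. Boosting the feature selector's success probability to $9/10$ by standard repetition, the event that (i) the feature selector succeeds on $\bm f_1$ and (ii) $\bm f_1$ is $\epsilon$-far from every $(k/2)$-junta holds with probability at least $4/5 > 2/3$; on this event the output must satisfy $|S \cap [n]| > k/2$, since otherwise step two would produce a $(k/2)$-junta within distance $\epsilon$ of $\bm f_1$. The assertion for $\bm f_2 \sim \cD_2$ follows by a symmetric argument, exchanging the roles of $[n]$ and $[2n] \setminus [n]$. I expect step two to be the only real content; the rest is bookkeeping.
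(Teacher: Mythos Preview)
Your approach is essentially the same as the paper's: show that success of the selector together with $\bm f_1$ being $\epsilon$-far from every $(k/2)$-junta forces $|S \cap [n]| > k/2$, then combine the $3/4$ selector guarantee with the $9/10$ farness hypothesis. One minor point: the boosting step is unnecessary and, as written, proves the claim for a boosted algorithm $A'$ using $O(m)$ samples rather than for $A$ itself; the paper instead observes directly that $\Pr[\text{far}] \cdot \Pr[A \text{ succeeds} \mid \bm f_1] \ge (9/10)(3/4) = 27/40 > 2/3$, which already suffices for the stated claim about $A$.
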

\begin{proof}
For each $f_1$ in the support of $\cD_1$, $A$ has probability at least $3/4$ over $\bm X$ of
choosing a set $S$ such that $f_1$ is $\epsilon$-close to a $k$-junta on variables $S$. If $f_1$ is
$\epsilon$-far from every $(k/2)$-junta, then $A$ has probability at least $3/4$ over $\bm X$
of choosing a set $S$ such that $S \cap [n] > k/2$. By assumption, the probability that $f_1 \sim
\cD_1$ is $\epsilon$-far from being a $(k/2)$-junta is at least $9/10$, so the claim follows.
\end{proof}

From the claim, we may conclude that the TV distance between the distribution of $\bm f_1(\bm X)$
and the distribution of $\bm f_2(\bm X)$ is at least $1/3$, since the probability of the event $|\bm
S \cap [n]| < k/2$ differs by $1/3$ between these two distributions. But this contradicts
\cref{prop:lb-junta-setups}, which implies that the TV distance between these two distributions is
at most $2(\tfrac{1}{100} + \epsilon)$.
\end{proof}

\subsection{Upper Bound on $\epsilon$ Dependence for the Uniform Distribution}
\label{section:lb-eps-unif}

We now show how to get better dependence on $\epsilon$ in the case where the distribution
is known to be the uniform distribution.

\begin{claim}
\label{claim:unif-junta-smallest}
For any two distinct $k$-juntas $f, g \colon \zo^n \to \zo$,
\[
  \dist_\unif(f,g) \geq 2^{-k} .
\]
\end{claim}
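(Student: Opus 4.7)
Let $S, T \subseteq [n]$ with $|S|, |T| \le k$ be sets containing the relevant variables of $f$ and $g$ respectively. Write $W = S \cap T$, $A = S \setminus T$, $B = T \setminus S$, so that $f$ only depends on coordinates in $W \cup A$ and $g$ only on $W \cup B$. The key arithmetic observation is that
\[
    |W| + \max(|A|, |B|) = \max(|S|, |T|) \le k,
\]
so it suffices to prove the bound $\dist_\unif(f,g) \geq 2^{-(|W| + \max(|A|, |B|))}$.

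The plan is to condition on the $W$-coordinates of a uniform $\bm x$ and exploit independence of the $A$- and $B$-coordinates. For each $z \in \zo^W$, let $f_z \colon \zo^A \to \zo$ and $g_z \colon \zo^B \to \zo$ be the restrictions. Since $f \neq g$, there exists some $z^* \in \zo^W$ at which $f_{z^*}$ and $g_{z^*}$ disagree when extended to the disjoint variable set $A \cup B$. I would then write
\[
    \Pr_{\bm x_A, \bm x_B}[f_{z^*}(\bm x_A) \neq g_{z^*}(\bm x_B)] = \alpha(1-\beta) + (1-\alpha)\beta,
\]
where $\alpha \define \Pr_{\bm x_A}[f_{z^*}(\bm x_A)=1]$ is an integer multiple of $2^{-|A|}$ and $\beta \define \Pr_{\bm x_B}[g_{z^*}(\bm x_B)=1]$ is an integer multiple of $2^{-|B|}$.

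The main step is to show that this expression is at least $2^{-\max(|A|,|B|)}$ whenever $f_{z^*} \neq g_{z^*}$. Assume WLOG $|A| \le |B|$. The pair $(\alpha,\beta)$ cannot equal $(0,0)$ or $(1,1)$ (since $f_{z^*}$ and $g_{z^*}$ are functions of disjoint variables and would otherwise be equal constants). Splitting into cases based on whether $\alpha \in \{0,1\}$ or not, one checks via the granularity of $\alpha$ and $\beta$ that $\alpha(1-\beta)+(1-\alpha)\beta \geq 2^{-|B|}$ in each case; the only non-trivial subcase is $0 < \alpha, \beta < 1$, where both $\min(\beta, 1-\beta) \geq 2^{-|B|}$ and the bound $\alpha \cdot (1-\beta) + (1-\alpha)\cdot \beta \ge 2^{-|B|}$ follows immediately.

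Averaging over $\bm x_W \sim \unif(\zo^W)$ then yields
\[
    \dist_\unif(f,g) \geq \frac{1}{2^{|W|}} \cdot 2^{-\max(|A|,|B|)} \geq 2^{-k}.
\]
The only real obstacle is the granular minimization above, but it is a short elementary calculation; all other steps are routine. As a sanity check, the bound is tight: for $f \equiv 0$ and $g = x_1 \wedge \cdots \wedge x_k$ we have $\dist_\unif(f,g) = 2^{-k}$.
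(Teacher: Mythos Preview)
Your proposal is correct and follows essentially the same approach as the paper: condition on the coordinates in $S\cap T$, use independence of the remaining $S$- and $T$-coordinates, and lower-bound the quantity $\alpha(1-\beta)+(1-\alpha)\beta$ via the granularity of $\alpha,\beta$. The paper fixes $|S|=|T|=k$ and argues the bilinear minimum over the rectangle $[\,2^{\Delta-k},1]\times[0,1-2^{\Delta-k}]$ is $2^{\Delta-k}$, whereas you keep $|S|,|T|\le k$ and do a short case split; both yield the same bound, and your version even gives the mild refinement $\dist_\unif(f,g)\ge 2^{-\max(|S|,|T|)}$.
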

\begin{proof}[Proof of claim]
Let $f$ be a $k$-junta on variables $S \in \binom{n}{k}$ and $g$ be a $k$-junta on variables $T \in
\binom{n}{k}$. Write $\Delta \define |S \cap T|$. Since $f, g$ are distinct there exist $z \in
\zo^\Delta$ and $x \in \zo^n$ such that $x_{S \cap T} = z$ and $f(x) \neq g(x)$. Without loss of
generality we may assume $f(x) = 1, g(x) = 0$. Write
\begin{align*}
\alpha &\define \Pruc{\bm y}{ f(\bm y) = 1 }{ \bm{y}_{S \cap T} = z } ,\\
\beta &\define \Pruc{\bm y}{ g(\bm y) = 1 }{ \bm{y}_{S \cap T} = z } .
\end{align*}
Note that $\alpha \geq 2^{\Delta-k}$ and $\beta \leq 1 - 2^{\Delta-k}$. The expression
\begin{equation}
\label{eq:ub-unif-ab}
  \alpha(1-\beta) + (1-\alpha)\beta = \alpha + \beta - 2\alpha\beta 
\end{equation}
is minimized when either both $\alpha, \beta$ attain their minimum values, or both attain their
maximum values. In each case the lower bound on \eqref{eq:ub-unif-ab} is $2^{\Delta-k}$. Then
\begin{align*}
\Pru{\bm y}{f(\bm y) \neq g(\bm y)}
&\geq \Pr{\bm{y}_{S \cap T} = z} \Pruc{}{f(\bm y) \neq g(\bm y)}{\bm{y}_{S \cap T} = z} \\
&\geq 2^{-\Delta} (\alpha(1-\beta) + (1-\alpha)\beta) \geq 2^{-k} . \qedhere
\end{align*}
\end{proof}

\paragraph*{Upper bound on feature selection.}
This follows from the same reduction to SOPP as in
\cref{res:ub-reduction-to-sopp}, except that if $\epsilon < 2^{-k}$ we do the reduction with
parameter $\epsilon^* = 2^{-k}$ instead of $\epsilon$. In this case, since $\dist_\unif(f,g) \geq
2^{-k}$ for any two distinct $k$-juntas $f$ and $g$ (\cref{claim:unif-junta-smallest}), this algorithm will output (with probability at
least $2/3$) the exact set of relevant variables. Via \cref{res:ub-sopp}, this gives an upper bound
of
\[
  O\left( \min\left\{1/\epsilon, 2^k\right\} \cdot \left( \sqrt{2^k \log\binom{n}{k}} + \log\binom{n}{k}
\right) \right).
\]

\paragraph*{Upper bound on testing juntas.}
The tester is as follows. On inputs $f
\colon \zo^n \to \zo$ and $\epsilon > 0$:
\begin{enumerate}
\item If $\epsilon \geq 2^{-k}$, use exactly the same algorithm as in
\cref{res:ub-reduction-to-sopp}. 
\item Otherwise, if $\epsilon < 2^{-k}$, use the above algorithm for $k$-feature selection, which
returns (with probability at least $3/4$) a set $S \in \binom{n}{k}$ with the property that, if $f$
is a $k$-junta, then $S$ is the exact set of relevant variables of $f$, due to
\cref{claim:unif-junta-smallest}. Now, use
$O( \sqrt{2^k}/\epsilon)$ samples to run the SOPP tester with domain size $N = 2^k$ and error
parameter $\epsilon$ on the distribution obtained from variables $S$ as in the reduction in
\cref{res:ub-reduction-to-sopp}.
\end{enumerate}
Together, these give the upper bound of
\[
  O\left( \min\left\{ \frac{1}{\epsilon}, 2^k \right\}
    \cdot  \left(\sqrt{2^k\log\binom{n}{k}} + \log\binom{n}{k}\right) + \frac{1}{\epsilon}\sqrt{2^k}\right) . 
\]

\newpage

\iftoggle{anonymous}{%
}{%
\subsection*{Acknowledgments}
Much of \cref{section:intro-big-picture} came from discussions with Renato Ferreira Pinto Jr.
Thanks to Shivam Nadimpalli and Shyamal Patel for helpful discussions. Thanks to anonymous reviewers
for many helpful comments and pointers to the literature. Some of this work was done while Nathaniel
Harms and Caleb Koch were visiting the Simons Institute for the Theory of Computing.  Nathaniel
Harms was partially funded by a Simons Fellowship, and by the Swiss State Secretariat for Education,
Research, and Innovation (SERI) under contract number MB22.00026. Caleb Koch was partially funded by
NSF awards 1942123, 2211237, and 2224246.
}

\bibliographystyle{alphaurl}
\bibliography{references.bib}

\end{document}